
\documentclass{article}

\usepackage{times}
\usepackage{graphicx} 
\usepackage{subcaption}
\graphicspath{ {img/} }

\usepackage{natbib}

\usepackage{algorithm}
\usepackage{algorithmic}

\usepackage{amsmath}
\usepackage{amssymb}
\usepackage{amsthm}
\usepackage{thmtools}

\usepackage{hyperref}



\usepackage[accepted]{icml2017}

\icmltitlerunning{Constrained Policy Optimization}

\newcommand{\tauE}[2]{\underset{\tau \sim #1}{\E}\left[ #2 \right]}

\newtheorem*{theorem*}{Theorem}
\newcommand{\smallsurr}[3]{\underset{\begin{subarray}{c} s \sim d^{#1} \\ a \sim #2 \end{subarray}}{\E} \left[ #3 \right]}

\newcommand{\surrA}[3]{\frac{1}{1-\gamma}\underset{\begin{subarray}{c} s \sim d^{#1} \\ a \sim #2 \end{subarray}}{\E} \left[ #3 \right]}

\begin{document} 

\twocolumn[
\icmltitle{Constrained Policy Optimization}



\icmlsetsymbol{equal}{*}

\begin{icmlauthorlist}
\icmlauthor{Joshua Achiam}{ucb}
\icmlauthor{David Held}{ucb}
\icmlauthor{Aviv Tamar}{ucb}
\icmlauthor{Pieter Abbeel}{ucb,o}
\end{icmlauthorlist}

\icmlaffiliation{ucb}{UC Berkeley}
\icmlaffiliation{o}{OpenAI}

\icmlcorrespondingauthor{Joshua Achiam}{jachiam@berkeley.edu}

\icmlkeywords{boring formatting information, machine learning, ICML}

\vskip 0.3in
]



\printAffiliationsAndNotice{}  

%
%



\newcommand{\avet}{{\mathbf  a}}
\newcommand{\bvet}{{\mathbf  b}}
\newcommand{\cvet}{{\mathbf  c}}
\newcommand{\dvet}{{\mathbf  d}}
\newcommand{\evet}{{\mathbf  e}}
\newcommand{\fvet}{{\mathbf  f}}
\newcommand{\gvet}{{\mathbf  g}}
\newcommand{\hvet}{{\mathbf  h}}
\newcommand{\ivet}{{\mathbf  i}}
\newcommand{\jvet}{{\mathbf  j}}
\newcommand{\kvet}{{\mathbf  k}}
\newcommand{\lvet}{{\mathbf  l}}
\newcommand{\mvet}{{\mathbf  m}}
\newcommand{\nvet}{{\mathbf  n}}
\newcommand{\ovet}{{\mathbf  o}}
\newcommand{\pvet}{{\mathbf  p}}
\newcommand{\qvet}{{\mathbf  q}}
\newcommand{\rvet}{{\mathbf  r}}
\newcommand{\svet}{{\mathbf  s}}
\newcommand{\tvet}{{\mathbf  t}}
\newcommand{\uvet}{{\mathbf  u}}
\newcommand{\vvet}{{\mathbf  v}}
\newcommand{\xvet}{{\mathbf  x}}
\newcommand{\yvet}{{\mathbf  y}}
\newcommand{\zvet}{{\mathbf  z}}
\newcommand{\wvet}{{\mathbf  w}}

\newcommand{\Avet}{{\mathbf  A}}
\newcommand{\Bvet}{{\mathbf  B}}
\newcommand{\Cvet}{{\mathbf  C}}
\newcommand{\Dvet}{{\mathbf  D}}
\newcommand{\Evet}{{\mathbf  E}}
\newcommand{\Fvet}{{\mathbf  F}}
\newcommand{\Gvet}{{\mathbf  G}}
\newcommand{\Hvet}{{\mathbf  H}}
\newcommand{\Ivet}{{\mathbf  I}}
\newcommand{\Jvet}{{\mathbf  J}}
\newcommand{\Kvet}{{\mathbf  K}}
\newcommand{\Lvet}{{\mathbf  L}}
\newcommand{\Mvet}{{\mathbf  M}}
\newcommand{\Nvet}{{\mathbf  N}}
\newcommand{\Ovet}{{\mathbf  O}}
\newcommand{\Pvet}{{\mathbf  P}}
\newcommand{\Qvet}{{\mathbf  Q}}
\newcommand{\Rvet}{{\mathbf  R}}
\newcommand{\Svet}{{\mathbf  S}}
\newcommand{\Tvet}{{\mathbf  T}}
\newcommand{\Uvet}{{\mathbf  U}}
\newcommand{\Xvet}{{\mathbf  X}}
\newcommand{\Yvet}{{\mathbf  Y}}
\newcommand{\Vvet}{{\mathbf  V}}
\newcommand{\Wvet}{{\mathbf  W}}
\newcommand{\Zvet}{{\mathbf  Z}}

\newcommand{\Deltavet}{\mathbf  \Delta}
\newcommand{\Lambdavet}{{\mathbf  \Lambda}}
\newcommand{\Sigmavet}{\mathbf  \Sigma}
\newcommand{\Thetavet}{{\mathbf  \Theta}}

\newcommand{\s}{ {\sigma} }

\newcommand{\e}{{\mathrm e}}
\newcommand{\jm}{{\mathrm j}}
\newcommand{\E}{{\mathrm E}}
\newcommand{\Ex}{{\mathbb E}}
\renewcommand{\d}{{\mathrm d}}
\newcommand{\dt}{{\mathrm d}t}
\newcommand{\X}{ {\mathcal X} }
\newcommand{\Y}{ {\mathcal Y} }
\newcommand{\Z}{ {\mathcal Z} }

\newcommand{\calA}{{\mathcal A}}
\newcommand{\calB}{{\mathcal B}}
\newcommand{\calC}{{\mathcal C}}
\newcommand{\calD}{{\mathcal D}}
\newcommand{\calE}{{\mathcal E}}
\newcommand{\calF}{{\mathcal F}}
\newcommand{\calG}{{\mathcal G}}
\newcommand{\calH}{{\mathcal H}}
\newcommand{\calI}{{\mathcal I}}
\newcommand{\calJ}{{\mathcal J}}
\newcommand{\calK}{{\mathcal K}}
\newcommand{\calL}{{\mathcal L}}
\newcommand{\calM}{{\mathcal M}}
\newcommand{\calN}{{\mathcal N}}
\newcommand{\calO}{{\mathcal O}}
\newcommand{\calP}{{\mathcal P}}
\newcommand{\calQ}{{\mathcal Q}}
\newcommand{\calR}{{\mathcal R}}
\newcommand{\calS}{{\mathcal S}}
\newcommand{\calT}{{\mathcal T}}
\newcommand{\calU}{{\mathcal U}}
\newcommand{\calV}{{\mathcal V}}
\newcommand{\calX}{{\mathcal X}}
\newcommand{\calY}{{\mathcal Y}}
\newcommand{\calW}{{\mathcal W}}
\newcommand{\calZ}{{\mathcal Z}}
\newcommand{\qtil}{{\tilde{q}}}
\newcommand{\td}{{\tilde{\delta}}}

\newcommand{\vect}[1]{ {\mbox{\rm vec}(#1)} }


\newcommand{\Atil}{\tilde{A}}
\newcommand{\Zhat}{\hat{Z}}
\newcommand{\Hbar}{\bar{H}}
\newcommand{\Dhat}{\hat{D}}
\newcommand{\dhat}{\hat{d}}

\newcommand{\rhat}{\hat{r}}
\newcommand{\xhat}{\hat{x}}
\newcommand{\yhat}{\hat{y}}
\newcommand{\zhat}{\hat{z}}
\newcommand{\xbar}{\bar{x}}
\newcommand{\ubar}{\bar{u}}
\newcommand{\ybar}{\bar{y}}
\newcommand{\zbar}{\bar{z}}
\newcommand{\pdot}{\dot{p}}
\newcommand{\pddot}{\ddot{p}}
\newcommand{\pbar}{\bar{p}}
\newcommand{\qdot}{\dot{q}}
\newcommand{\qddot}{\ddot{q}}
\newcommand{\qbar}{\bar{q}}
\newcommand{\xdot}{\dot{x}}
\newcommand{\ydot}{\dot{y}}
\newcommand{\zdot}{\dot{z}}
\newcommand{\yddot}{\ddot{y}}
\newcommand{\thdot}{\dot{\theta}}
\newcommand{\thddot}{\ddot{\theta}}
\newcommand{\util}{{\tilde{u}}}
\newcommand{\xtil}{{\tilde{x}}}
\newcommand{\ytil}{{\tilde{y}}}
\newcommand{\lam}{\lambda}
\newcommand{\lamax}{\lambda\ped{max}}
\newcommand{\lamin}{\lambda\ped{min}}
\newcommand{\adj}{ {\mbox{\rm adj}\;} }
\newcommand{\sign}{\mbox {\rm sgn}}
\newcommand{\spn}{\mbox {\rm span}}
\newcommand{\barJ}{\bar{J}}
\newcommand{\dom}{\mathop {\mathrm {dom}}}
\newcommand{\card}{\mathop{\mathrm{card}}}
\newcommand{\subt}{\mathop{\mathrm{s.t.}}}

\newcommand{\epi}{\mathop{\mathrm{epi}}}
\newcommand{\env}{\mathop{\mathrm{env}}}
\newcommand{\chull}{\mathop{\mathrm{co}}}
\newcommand{\graph}{\mathop{\mathrm{graph}}}
\newcommand{\prox}[1]{\mathop{\mathrm{prox}_{#1}}}
\newcommand{\sthr}[1]{\mathop{\mathrm{sthr}_{#1}}}

\def\hardsection{$\spadesuit\;$}

\newcommand{\Real}[1]{ { {\mathbb R}^{#1} } }
\newcommand{\Realp}[1]{ { {\mathbb R}_{+}^{#1} } }
\newcommand{\Realpp}[1]{ { {\mathbb R}_{++}^{#1} } }
\newcommand{\Complex}[1]{ { {\mathbb C}^{#1} } }
\newcommand{\Imag}[1]{ { {\mathbb I}^{#1} } }
\newcommand{\Field}[1]{ {\mathbb F}^{#1} }
\newcommand{\F}{ {\mathbb F}}
\newcommand{\Orth}[1]{ { {\calG_{\calO}^{#1}} } }
\newcommand{\Unit}[1]{ { {\calG_{\calU}^{#1}} } }
\newcommand{\Sym}[1]{ { {\mathbb S}^{#1} } }
\newcommand{\Symp}[1]{ { {\mathbb S}_{+}^{#1} } }
\newcommand{\Sympp}[1]{ { {\mathbb S}_{++}^{#1} } }
\newcommand{\Herm}[1]{ { {\mathbb H}^{#1} } }
\newcommand{\Skew}[1]{ { {\mathbb S\mathbb K}^{#1} } }
\newcommand{\Skherm}[1]{ { {\mathbb H\mathbb K}^{#1} } }
\newcommand{\Rman}[1]{ { {\mathcal R}^{#1} } } 
\newcommand{\Cman}[1]{ { {\mathcal C}^{#1} } }
\newcommand{\Hinf}[1]{ {  {\mathcal H}_\infty^{#1} } }
\newcommand{\RHinf}[1]{ { {\mathcal RH}_\infty^{#1} } }
\newcommand{\Htwo}[1]{ {  {\mathcal H}_2^{#1} } }
\newcommand{\RHtwo}[1]{ { {\mathcal RH}_2^{#1} } }

\newcommand{\dist}[1]{{\mathrm{dist}}{\left( #1 \right)}}
\newcommand{\diff}[2]{ \frac{\d {#1}}{\d {#2}}  }
\newcommand{\diffp}[2]{ \frac{\partial {#1}}{\partial {#2}}  }
\newcommand{\diffqd}[2]{ \frac{\d^2 {#1}}{\d {#2}^2}  }
\newcommand{\diffq}[2]{ \frac{\d^2 {#1}}{\d {#2}}  }
\newcommand{\diffqq}[3]{ \frac{\d^2 {#1}}{ \d {#2} \d {#3}  }}
\newcommand{\diffpq}[2]{ \frac{\partial^2 {#1}}{\partial {#2}^2}  }
\newcommand{\difftq}[3]{ \frac{\partial^2 {#1}}{\partial {#2}\partial {#3}}  }
\newcommand{\diffi}[3]{ \frac{\d^{#3} {#1}}{\d {#2}^{#3}}  }
\newcommand{\diffpi}[3]{ \frac{\partial^{#3} {#1}}{\partial {#2}^{#3}}  }
\newcommand{\binomial}[2]{\scriptsize{\left(\!\! \ba{c} #1 \\ #2 \ea \!\! \right)} }
\newcommand{\comb}[2]{{\left(\!\!\! \ba{c} #1 \\ #2 \ea \!\!\! \right)} }

\newcommand{\simax}{{\sigma_{\mathrm{max}}}}
\newcommand{\simin}{{\sigma_{\mathrm{min}}}}
\newcommand{\prob}{{\mbox{\rm Prob}}}
\newcommand{\var}{{\mbox{\rm var}}}
\newcommand{\sint}{{\mbox{\rm int}\,}} 
\newcommand{\relint}{{\mbox{\rm relint}\,}} 
\newcommand{\ns}{{\mbox{\tt ns}}}

\newcommand{\rank}{\mathop{\mathrm{rank}}\nolimits}
\newcommand{\range}{\mathop{\mathcal{R}}\nolimits}
\newcommand{\nulsp}{\mathop{\mathcal{N}}\nolimits}
\newcommand{\diagop}{\mathop{\mathrm{diag}}\nolimits}
\newcommand{\Var}{\mathop{\mathrm{var}}\nolimits}
\newcommand{\tr}{\mathop{\mathrm{trace}}\nolimits}
\newcommand{\sinc}{\mathop{\mathrm{sinc}}\nolimits}

\newcommand{\pre}[1]{ { {\mathop{\mathrm{Re}}}  \left({#1}\right)} }
\newcommand{\pim}[1]{ { {\mathop{\mathrm{Im}}}  ({#1})} }
\newcommand{\rp}{ ^{\Real{}} }
\newcommand{\ip}{ ^{\Imag{}} }

\newcommand{\one}{{\mathbf  1}}
\newcommand{\dss}{\displaystyle}
\newcommand{\inv}{^{-1}}
\newcommand{\pinv}{^{\dagger}}
\newcommand{\diag}[1]{\mathrm{diag}\left({#1}\right)}
\newcommand{\blockdiag}[1]{\mbox{\rm bdiag}\left({#1}\right)}
\newcommand{\tran}{^{\top}}
\newcommand{\inner}[1]{\langle {#1} \rangle}
\newcommand{\ped}[1]{_{\mathrm{#1}}}
\newcommand{\ap}[1]{^{\mathrm{#1}}}

\newcommand{\blu}[1]{\textcolor{blue}{#1}}
\newcommand{\red}[1]{\textcolor{red}{#1}}
\newcommand{\green}[1]{\textcolor{green}{#1}}
\newcommand{\cyan}[1]{\textcolor{cyan}{#1}}

\newcommand{\beq}{\begin{equation}}
\newcommand{\eeq}{\end{equation}}
\newcommand{\bea}{\begin{eqnarray}}
\newcommand{\eea}{\end{eqnarray}}
\newcommand{\beas}{\begin{eqnarray*}}
\newcommand{\eeas}{\end{eqnarray*}}
\newcommand{\ba}{\begin{array}}
\newcommand{\ea}{\end{array}}
\newcommand{\bit}{\begin{itemize}}
\newcommand{\eit}{\end{itemize}}
\newcommand{\ben}{\begin{enumerate}}
\newcommand{\een}{\end{enumerate}}
\newcommand{\bde}{\begin{description}}
\newcommand{\ede}{\end{description}}
\newcommand{\bsp}{\begin{split}}
\newcommand{\esp}{\end{split}}

\newtheorem{corollary}{Corollary}
\newtheorem{theorem}{Theorem}
\newtheorem{exercise}{Exercise}
\newtheorem{solution}{Solution}
\newtheorem{assumption}{Assumption}
\newtheorem{definition}{Definition}
\newtheorem{proposition}{Proposition}
\newtheorem{lemma}{Lemma}
\newtheorem{fact}{Fact}

%
%

\def\nocolon{}

\newcommand{\monthyear}{%
  \ifcase\month\or January\or February\or March\or April\or May\or June\or
  July\or August\or September\or October\or November\or
  December\fi\space\number\year
}

\newcommand{\openepigraph}[2]{%
  \begin{fullwidth}
  \sffamily\large
  \begin{doublespace}
  \noindent\allcaps{#1}\\
  \noindent\allcaps{#2}
  \end{doublespace}
  \end{fullwidth}
}

\newcommand{\blankpage}{\newpage\hbox{}\thispagestyle{empty}\newpage}

\begin{abstract} 

For many applications of reinforcement learning it can be more convenient to specify both a reward function and constraints, rather than trying to design behavior through the reward function. For example, systems that physically interact with or around humans should satisfy safety constraints. Recent advances in policy search algorithms \cite{Mnih2016,Schulman2006,Lillicrap2016,Levine2016} have enabled new capabilities in high-dimensional control, but do not consider the constrained setting.

We propose Constrained Policy Optimization (CPO), the first general-purpose policy search algorithm for constrained reinforcement learning with guarantees for near-constraint satisfaction at each iteration. Our method allows us to train neural network policies for high-dimensional control while making guarantees about policy behavior all throughout training. Our guarantees are based on a new theoretical result, which is of independent interest: we prove a bound relating the expected returns of two policies to an average divergence between them. We demonstrate the effectiveness of our approach on simulated robot locomotion tasks where the agent must satisfy constraints motivated by safety.

\end{abstract} 

\section{Introduction}

Recently, deep reinforcement learning has enabled neural network policies to achieve state-of-the-art performance on many high-dimensional control tasks, including Atari games (using pixels as inputs) \cite{Mnih2015, Mnih2016}, robot locomotion and manipulation \cite{Schulman2006, Levine2016, Lillicrap2016}, and even Go at the human grandmaster level \cite{Silver2016}. 

In reinforcement learning (RL), agents learn to act by trial and error, gradually improving their performance at the task as learning progresses. Recent work in deep RL assumes that agents are free to explore \emph{any behavior} during learning, so long as it leads to performance improvement. In many realistic domains, however, it may be unacceptable to give an agent complete freedom. Consider, for example, an industrial robot arm learning to assemble a new product in a factory. Some behaviors could cause it to damage itself or the plant around it---or worse, take actions that are harmful to people working nearby. In domains like this, \emph{safe exploration} for RL agents is important \cite{Moldovan2012, Amodei2016}. A natural way to incorporate safety is via constraints.

A standard and well-studied formulation for reinforcement learning with constraints is the constrained Markov Decision Process (CMDP) framework \cite{Altman1999}, where agents must satisfy constraints on expectations of auxilliary costs. Although optimal policies for finite CMDPs with known models can be obtained by linear programming, methods for high-dimensional control are lacking.  

Currently, policy search algorithms enjoy state-of-the-art performance on high-dimensional control tasks \cite{Mnih2016, Duan2016}. Heuristic algorithms for policy search in CMDPs have been proposed \cite{Uchibe2007}, and approaches based on primal-dual methods can be shown to converge to constraint-satisfying policies \cite{Chow2015}, but there is currently no approach for policy search in continuous CMDPs that guarantees every policy during learning will satisfy constraints. In this work, we propose the first such algorithm, allowing applications to constrained deep RL. 

Driving our approach is a new theoretical result that bounds the difference between the rewards or costs of two different policies. This result, which is of independent interest, tightens known bounds for policy search using trust regions \cite{Kakade2002,Pirotta2013,Schulman2006}, and provides a tighter connection between the theory and practice of policy search for deep RL. Here, we use this result to derive a policy improvement step that guarantees both an increase in reward and satisfaction of constraints on other costs. This step forms the basis for our algorithm, \textit{Constrained Policy Optimization} (CPO), which computes an approximation to the theoretically-justified update. 

In our experiments, we show that CPO can train neural network policies with thousands of parameters on high-dimensional simulated robot locomotion tasks to maximize rewards while successfully enforcing constraints.

\section{Related Work}

Safety has long been a topic of interest in RL research, and a comprehensive overview of safety in RL was given by \cite{Garcia2015}.

Safe policy search methods have been proposed in prior work. Uchibe and Doya \yrcite{Uchibe2007} gave a policy gradient algorithm that uses gradient projection to enforce active constraints, but this approach suffers from an inability to prevent a policy from becoming unsafe in the first place. Bou Ammar et al. \yrcite{BouAmmar2015} propose a theoretically-motivated policy gradient method for lifelong learning with safety constraints, but their method involves an expensive inner loop optimization of a semi-definite program, making it unsuited for the deep RL setting. Their method also assumes that safety constraints are linear in policy parameters, which is limiting. Chow et al. \yrcite{Chow2015} propose a primal-dual subgradient method for risk-constrained reinforcement learning which takes policy gradient steps on an objective that trades off return with risk, while simultaneously learning the trade-off coefficients (dual variables).  

Some approaches specifically focus on application to the deep RL setting. Held et al. \yrcite{Held2017} study the problem for robotic manipulation, but the assumptions they make restrict the applicability of their methods. Lipton et al. \yrcite{Lipton2017} use an `intrinsic fear' heuristic, as opposed to constraints, to motivate agents to avoid rare but catastrophic events. Shalev-Shwartz et al. \yrcite{Shalev-Shwartz2016} avoid the problem of enforcing constraints on parametrized policies by decomposing `desires' from trajectory planning; the neural network policy learns desires for behavior, while the trajectory planning algorithm (which is not learned) selects final behavior and enforces safety constraints. 

In contrast to prior work, our method is the first policy search algorithm for CMDPs that both 1) guarantees constraint satisfaction throughout training, and 2) works for arbitrary policy classes (including neural networks).

\section{Preliminaries}

A Markov decision process (MDP) is a tuple, ($S,A,R,P,\mu$), where $S$ is the set of states, $A$ is the set of actions, $R : S \times A \times S \to \Real{}$ is the reward function, $P : S \times A \times S \to [0,1]$ is the transition probability function (where $P(s'|s,a)$ is the probability of transitioning to state $s'$ given that the previous state was $s$ and the agent took action $a$ in $s$), and $\mu : S \to [0,1]$ is the starting state distribution. A stationary policy $\pi : S \to \calP(A)$ is a map from states to probability distributions over actions, with $\pi(a|s)$ denoting the probability of selecting action $a$ in state $s$. We denote the set of all stationary policies by $\Pi$. 

In reinforcement learning, we aim to select a policy $\pi$ which maximizes a performance measure, $J(\pi)$, which is typically taken to be the infinite horizon discounted total return, $J(\pi) \doteq \tauE{\pi}{\sum_{t=0}^{\infty} \gamma^t R(s_t, a_t, s_{t+1})}$. Here $\gamma \in [0,1)$ is the discount factor, $\tau$ denotes a trajectory ($\tau = (s_0, a_0, s_1, ...)$), and $\tau \sim \pi$ is shorthand for indicating that the distribution over trajectories depends on $\pi$: $s_0 \sim \mu$, $a_t \sim \pi(\cdot|s_t)$, $s_{t+1} \sim P(\cdot | s_t, a_t)$.

Letting $R(\tau)$ denote the discounted return of a trajectory, we express the on-policy value function as $V^{\pi} (s) \doteq \E_{\tau \sim \pi}[R(\tau) | s_0 = s]$ and the on-policy action-value function as $Q^{\pi} (s,a) \doteq \E_{\tau \sim \pi} [R(\tau) | s_0= s, a_0 = a]$. The advantage function is $A^{\pi} (s,a) \doteq Q^{\pi}(s,a) - V^{\pi}(s)$.

Also of interest is the discounted future state distribution, $d^{\pi}$, defined by $d^{\pi} (s) = (1-\gamma) \sum_{t=0}^{\infty} \gamma^t P(s_t = s | \pi)$. It allows us to compactly express the difference in performance between two policies $\pi', \pi$ as
\begin{equation}
J(\pi') - J(\pi) = \frac{1}{1-\gamma} \underset{\begin{subarray}{c} s \sim d^{\pi'} \\ a \sim \pi' \end{subarray}}{\E} \left[ A^{\pi} (s,a) \right], \label{connection}
\end{equation}
where by $a \sim \pi'$, we mean $a \sim \pi'(\cdot |s)$, with explicit notation dropped to reduce clutter. For proof of (\ref{connection}), see \cite{Kakade2002} or Section \ref{appendix} in the supplementary material. 

\section{Constrained Markov Decision Processes}

A constrained Markov decision process (CMDP) is an MDP augmented with constraints that restrict the set of allowable policies for that MDP. Specifically, we augment the MDP with a set $C$ of auxiliary cost functions, $C_1, ..., C_m$ (with each one a function $C_i : S \times A \times S \to \Real{}$ mapping transition tuples to costs, like the usual reward), and limits $d_1, ..., d_m$. Let $J_{C_i} (\pi)$ denote the expected discounted return of policy $\pi$ with respect to cost function $C_i$: $J_{C_i} (\pi) = \tauE{\pi}{\sum_{t=0}^{\infty} \gamma^t C_i (s_t, a_t, s_{t+1})}$. The set of feasible stationary policies for a CMDP is then
\begin{equation*}
\Pi_{C} \doteq \left\{\pi \in \Pi \; : \; \forall i,  J_{C_i} (\pi) \leq d_i\right\},
\end{equation*}
and the reinforcement learning problem in a CMDP is
\begin{equation*}
\pi^* = \arg\max_{ \pi \in \Pi_{C} } J(\pi).
\end{equation*}
The choice of optimizing only over stationary policies is justified: it has been shown that the set of all optimal policies for a CMDP includes stationary policies, under mild technical conditions. For a thorough review of CMDPs and CMDP theory, we refer the reader to \cite{Altman1999}. 

We refer to $J_{C_i}$ as a \textit{constraint return}, or $C_i$-return for short. Lastly, we define on-policy value functions, action-value functions, and advantage functions for the auxiliary costs in analogy to $V^{\pi}$, $Q^{\pi}$, and $A^{\pi}$, with $C_i$ replacing $R$: respectively, we denote these by $V_{C_i}^{\pi}$, $Q_{C_i}^{\pi}$, and $A_{C_i}^{\pi}$.

\section{Constrained Policy Optimization}

For large or continuous MDPs, solving for the exact optimal policy is intractable due to the curse of dimensionality \cite{Sutton1998}. Policy search algorithms approach this problem by searching for the optimal policy within a set $\Pi_{\theta} \subseteq \Pi$  of parametrized policies with parameters $\theta$ (for example, neural networks of a fixed architecture). In local policy search \cite{Peters2008a}, the policy is iteratively updated by maximizing $J(\pi)$ over a local neighborhood of the most recent iterate $\pi_k$: 
\begin{equation} \label{polopt}
\begin{aligned} 
\pi_{k+1} = \arg \max_{\pi \in \Pi_{\theta}} \; & J(\pi) \\
\text{s.t. } & D (\pi, \pi_k) \leq \delta,
\end{aligned}
\end{equation}
where $D$ is some distance measure, and $\delta > 0$ is a step size. When the objective is estimated by linearizing around $\pi_k$ as $J(\pi_k) + g^T (\theta - \theta_k)$, $g$ is the policy gradient, and the standard policy gradient update is obtained by choosing $D(\pi, \pi_k)= \|\theta - \theta_k\|_2$ \cite{Schulman2006}.

In local policy search for CMDPs, we additionally require policy iterates to be feasible for the CMDP, so instead of optimizing over $\Pi_{\theta}$, we optimize over $\Pi_{\theta} \cap \Pi_C$:
\begin{equation}\label{cpolopt}
\begin{aligned}
\pi_{k+1} = \arg \max_{\pi \in \Pi_{\theta}} \; &  J(\pi) \\
\text{s.t. } & J_{C_i} (\pi) \leq d_i \;\;\; i=1, ..., m\\
& D(\pi, \pi_k) \leq \delta.
\end{aligned}
\end{equation}
This update is difficult to implement in practice because it requires evaluation of the constraint functions to determine whether a proposed point $\pi$ is feasible. When using sampling to compute policy updates, as is typically done in high-dimensional control \cite{Duan2016}, this requires off-policy evaluation, which is known to be challenging \cite{Jiang2015}. In this work, we take a different approach, motivated by recent methods for trust region optimization \cite{Schulman2006}.

We develop a principled approximation to (\ref{cpolopt}) with a particular choice of $D$, where we replace the objective and constraints with \textit{surrogate} functions. The surrogates we choose are easy to estimate from samples collected on $\pi_k$, and are good local approximations for the objective and constraints. Our theoretical analysis shows that for our choices of surrogates, we can bound our update's worst-case performance and worst-case constraint violation with values that depend on a hyperparameter of the algorithm. 

To prove the performance guarantees associated with our surrogates, we first prove new bounds on the difference in returns (or constraint returns) between two arbitrary stochastic policies in terms of an average divergence between them. We then show how our bounds permit a new analysis of trust region methods in general: specifically, we prove a worst-case performance degradation at each update.  We conclude by motivating, presenting, and proving gurantees on our algorithm, Constrained Policy Optimization (CPO), a trust region method for CMDPs.

\subsection{Policy Performance Bounds}

In this section, we present the theoretical foundation for our approach---a new bound on the difference in returns between two arbitrary policies. This result, which is of independent interest, extends the works of  \cite{Kakade2002}, \cite{Pirotta2013}, and  \cite{Schulman2006}, providing tighter bounds. As we show later, it also relates the theoretical bounds for trust region policy improvement with the actual trust region algorithms that have been demonstrated to be successful in practice \cite{Duan2016}. In the context of constrained policy search, we later use our results to propose policy updates that both improve the expected return and satisfy constraints. 

The following theorem connects the difference in returns (or constraint returns) between two arbitrary policies to an average divergence between them.
\begin{restatable}{theorem}{performancebound}
\label{maintheorem_0}  For any function $f: S \to \Real{}$ and any policies $\pi'$ and $\pi$, define $\delta_f (s,a,s') \doteq R(s,a,s') + \gamma f(s') - f(s)$,
\begin{equation*}
\epsilon_f^{\pi'} \doteq \max_s \left| \E_{a \sim \pi', s'\sim P} [\delta_f (s,a,s')] \right|, 
\end{equation*}
\begin{align*}
&L_{\pi,f} (\pi') \doteq \underset{\begin{subarray}{c} s \sim d^{\pi} \\ a \sim \pi \\ s' \sim P\end{subarray}}{\E} \left[ \left(\frac{\pi'(a|s)}{\pi(a|s)} - 1 \right) \delta_f(s,a,s') \right], \text{ and }\label{surrogate_0}
\end{align*}
\begin{equation*}
\begin{aligned}
&D_{\pi,f}^{\pm} (\pi') \doteq  \frac{L_{\pi,f} (\pi')}{1-\gamma} \pm \frac{2\gamma \epsilon_f^{\pi'}}{(1-\gamma)^2} \underset{s \sim d^{\pi}}{\E} \left[ D_{TV} (\pi'||\pi)[s] \right],
\end{aligned} 
\end{equation*}
%
where $D_{TV}(\pi'||\pi)[s] = (1/2)\sum_a \left| \pi'(a|s) - \pi(a|s) \right|$ is the total variational divergence between action distributions at $s$. The following bounds hold:
\begin{equation}
D_{\pi,f}^{+} (\pi') \geq J(\pi') - J(\pi) \geq D_{\pi,f}^{-} (\pi'). \label{bound}
\end{equation} 
Furthermore, the bounds are tight (when $\pi' = \pi$, all three expressions are identically zero). 
\end{restatable}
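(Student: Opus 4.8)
The plan is to reduce everything to the exact identity (\ref{connection}) and then to control a single ``distribution-shift'' remainder term. First I would establish a generalized form of (\ref{connection}) valid for \emph{arbitrary} $f$. Writing $\bar\delta_f^{\pi'}(s) \doteq \E_{a\sim\pi',\, s'\sim P}[\delta_f(s,a,s')]$, the key observation is that summing $\gamma^t \delta_f(s_t,a_t,s_{t+1})$ along a trajectory telescopes the $f$ terms down to $-f(s_0)$, so that $\E_{\tau\sim\pi'}[\sum_t \gamma^t \delta_f] = J(\pi') - \E_{s_0\sim\mu}[f(s_0)]$, and likewise for $\pi$. Subtracting cancels the $f(s_0)$ contribution and yields $J(\pi')-J(\pi) = \frac{1}{1-\gamma}\left(\E_{s\sim d^{\pi'}}[\bar\delta_f^{\pi'}(s)] - \E_{s\sim d^{\pi}}[\bar\delta_f^{\pi}(s)]\right)$. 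Taking $f=V^\pi$ recovers (\ref{connection}), since then $\bar\delta_f^\pi \equiv 0$ and $\bar\delta_f^{\pi'} = \E_{a\sim\pi'}[A^\pi]$.

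Next I would isolate the surrogate. Adding and subtracting $\E_{s\sim d^\pi}[\bar\delta_f^{\pi'}(s)]$ splits the difference into two pieces. The on-distribution piece $\E_{s\sim d^\pi}[\bar\delta_f^{\pi'}(s)] - \E_{s\sim d^\pi}[\bar\delta_f^\pi(s)]$ is exactly $L_{\pi,f}(\pi')$, as seen by rewriting $\bar\delta_f^{\pi'}(s)$ via importance sampling as $\E_{a\sim\pi}[(\pi'(a|s)/\pi(a|s))\,\delta_f]$. The leftover is the distribution-shift remainder, giving $J(\pi')-J(\pi) = \frac{L_{\pi,f}(\pi')}{1-\gamma} + \frac{1}{1-\gamma}\langle d^{\pi'}-d^\pi,\, \bar\delta_f^{\pi'}\rangle$. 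By H\"older's inequality the remainder is bounded in magnitude by $\frac{1}{1-\gamma}\|d^{\pi'}-d^\pi\|_1 \cdot \max_s|\bar\delta_f^{\pi'}(s)| = \frac{\epsilon_f^{\pi'}}{1-\gamma}\|d^{\pi'}-d^\pi\|_1$, where the $\max$ is precisely $\epsilon_f^{\pi'}$.

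The crux---and the step I expect to be the main obstacle---is bounding $\|d^{\pi'}-d^\pi\|_1$ by $\frac{2\gamma}{1-\gamma}\E_{s\sim d^\pi}[D_{TV}(\pi'||\pi)[s]]$. For this I would pass to the resolvent representation $d^\pi = (1-\gamma)(I-\gamma P_\pi)^{-1}\mu$, where $P_\pi(s'|s) = \sum_a \pi(a|s)P(s'|s,a)$ is the induced state-transition operator. The resolvent identity then gives $d^{\pi'}-d^\pi = \gamma\,(I-\gamma P_{\pi'})^{-1}(P_{\pi'}-P_\pi)\,d^\pi$. Taking $\ell_1$ norms, the Neumann series yields $\|(I-\gamma P_{\pi'})^{-1}\|_1 \le (1-\gamma)^{-1}$ (each $P_{\pi'}^t$ is a stochastic operator of unit $\ell_1$ norm), while a triangle-inequality calculation shows the column-$s$ mass $\sum_{s'}|P_{\pi'}(s'|s)-P_\pi(s'|s)| \le \sum_a |\pi'(a|s)-\pi(a|s)| = 2 D_{TV}(\pi'||\pi)[s]$ once the transition kernel is summed out. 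Weighting by $d^\pi$ and combining the two estimates produces the claimed bound.

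Finally I would assemble the pieces: the two-sided remainder estimate gives exactly $D_{\pi,f}^{-}(\pi') \le J(\pi')-J(\pi) \le D_{\pi,f}^{+}(\pi')$, which is (\ref{bound}). Tightness at $\pi'=\pi$ is immediate, since then the importance ratio is identically $1$ (so $L_{\pi,f}(\pi)=0$) and $D_{TV}(\pi||\pi)[s]=0$, making all three expressions vanish.
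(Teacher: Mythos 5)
Your proposal is correct and follows essentially the same route as the paper's proof: the same splitting of $J(\pi')-J(\pi)$ into the on-distribution term (identified as $L_{\pi,f}(\pi')$ via importance sampling) plus the remainder $\frac{1}{1-\gamma}\langle d^{\pi'}-d^{\pi},\bar\delta_f^{\pi'}\rangle$ bounded by H\"older with $(p,q)=(1,\infty)$, and the same key lemma bounding $\|d^{\pi'}-d^{\pi}\|_1$ through the resolvent identity $d^{\pi'}-d^{\pi}=\gamma(I-\gamma P_{\pi'})^{-1}(P_{\pi'}-P_{\pi})d^{\pi}$ with the Neumann-series and kernel-summing estimates. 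The only cosmetic difference is that you obtain the underlying identity $J(\pi)=\E_{s\sim\mu}[f(s)]+\frac{1}{1-\gamma}\E[\delta_f]$ by telescoping $\gamma^t\delta_f$ along trajectories, whereas the paper derives it by matrix algebra on $(I-\gamma P_{\pi})^{-1}\mu$.
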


Before proceeding, we connect this result to prior work. By bounding the expectation $\E_{s \sim d^{\pi}}\left[D_{TV} (\pi' || \pi)[s]\right]$ with $\max_s D_{TV} (\pi'||\pi)[s]$, picking $f = V^{\pi}$, and bounding $\epsilon_{V^{\pi}}^{\pi'}$ to get a second factor of $\max_s D_{TV} (\pi'||\pi)[s]$, we recover (up to assumption-dependent factors) the bounds given by Pirotta et~al. \yrcite{Pirotta2013} as Corollary 3.6, and by Schulman et~al. \yrcite{Schulman2006} as Theorem 1a. 

The choice of $f = V^{\pi}$ allows a useful form of the lower  bound, so we give it as a corollary.
\begin{corollary}
For any policies $\pi', \pi$, with $\epsilon^{\pi'} \doteq \max_s | \E_{a \sim \pi'} [A^{\pi} (s,a) ] |$, the following bound holds:
\begin{equation}
\begin{aligned}
&J(\pi') - J(\pi)\\
& \geq \frac{1}{1-\gamma} \underset{\begin{subarray}{c} s \sim d^{\pi} \\ a \sim \pi' \end{subarray}}{\E} \left[ A^{\pi} (s,a) - \frac{2\gamma \epsilon^{\pi'}}{1-\gamma}  D_{TV} (\pi'||\pi)[s] \right]. \label{bound2} 
\end{aligned}
\end{equation}
\label{advantage-bound}
\end{corollary}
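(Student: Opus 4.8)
The plan is to specialize Theorem~\ref{maintheorem_0} to the choice $f = V^{\pi}$ and then simplify each ingredient of the lower bound $D_{\pi,f}^{-}(\pi')$ until it matches the right-hand side of (\ref{bound2}). First I would compute the TD-residual $\delta_{V^{\pi}}(s,a,s') = R(s,a,s') + \gamma V^{\pi}(s') - V^{\pi}(s)$ and observe that averaging it over $s' \sim P$ reconstructs the advantage: since $Q^{\pi}(s,a) = \E_{s'\sim P}[R(s,a,s') + \gamma V^{\pi}(s')]$, we get $\E_{s'\sim P}[\delta_{V^{\pi}}(s,a,s')] = Q^{\pi}(s,a) - V^{\pi}(s) = A^{\pi}(s,a)$. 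Averaging additionally over $a \sim \pi'$ yields $\E_{a\sim\pi',\, s'\sim P}[\delta_{V^{\pi}}(s,a,s')] = \E_{a\sim\pi'}[A^{\pi}(s,a)]$, so the theorem's error term $\epsilon_{V^{\pi}}^{\pi'}$ collapses exactly to the quantity $\epsilon^{\pi'}$ defined in the corollary.

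Next I would simplify the surrogate $L_{\pi,V^{\pi}}(\pi')$. The key tool is the importance-sampling identity $\E_{a\sim\pi}\left[(\pi'(a|s)/\pi(a|s))\, g(a)\right] = \E_{a\sim\pi'}[g(a)]$, valid for any integrand $g$. Applying it with $g = \delta_{V^{\pi}}$ converts the ratio term into an expectation under $\pi'$, producing $\E_{a\sim\pi'}[A^{\pi}(s,a)]$, while the $-1$ term contributes the negative of $\E_{a\sim\pi}[A^{\pi}(s,a)]$, which vanishes by the standard identity $\E_{a\sim\pi}[A^{\pi}(s,a)] = 0$. Hence $L_{\pi,V^{\pi}}(\pi') = \underset{\begin{subarray}{c} s \sim d^{\pi} \\ a \sim \pi' \end{subarray}}{\E}[A^{\pi}(s,a)]$.

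Finally I would substitute these two facts into $D_{\pi,f}^{-}(\pi')$ and merge its two terms into a single expectation. Factoring out $1/(1-\gamma)$ leaves $\E_{s\sim d^{\pi}, a\sim\pi'}[A^{\pi}(s,a)]$ against $(2\gamma\epsilon^{\pi'}/(1-\gamma))\,\E_{s\sim d^{\pi}}[D_{TV}(\pi'||\pi)[s]]$; because $D_{TV}(\pi'||\pi)[s]$ depends only on $s$, I can rewrite the latter expectation over the joint distribution $s\sim d^{\pi}, a\sim\pi'$ (the $a$-marginal integrates to one) and collect everything under one expectation, giving precisely (\ref{bound2}). This derivation is mostly bookkeeping; the only step demanding care is keeping the two averaging measures straight, since the $\gamma V^{\pi}(s')$ piece of $\delta_{V^{\pi}}$ must be integrated against $P$ (not $\pi$) to rebuild $Q^{\pi}$, and the baseline cancellation hinges on $\E_{a\sim\pi}[A^{\pi}] = 0$ rather than on any property of $\pi'$.
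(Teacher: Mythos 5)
Your proof is correct and takes exactly the route the paper intends: the corollary is just Theorem \ref{maintheorem_0} specialized to $f = V^{\pi}$, with $\epsilon_{V^{\pi}}^{\pi'}$ collapsing to $\epsilon^{\pi'}$ because $\E_{a\sim\pi',\,s'\sim P}[\delta_{V^{\pi}}(s,a,s')] = \E_{a\sim\pi'}[A^{\pi}(s,a)]$, and $L_{\pi,V^{\pi}}(\pi')$ reducing to $\E_{s\sim d^{\pi},\,a\sim\pi'}[A^{\pi}(s,a)]$ via importance sampling together with $\E_{a\sim\pi}[A^{\pi}(s,a)]=0$. The paper states the corollary without spelling out these steps, so your write-up simply supplies the bookkeeping the authors leave implicit.
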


The bound (\ref{bound2}) should be compared with equation (\ref{connection}). The term $(1-\gamma)^{-1} \E_{s\sim d^{\pi}, a\sim \pi'}[A^{\pi}(s,a)]$ in (\ref{bound2}) is an approximation to $J(\pi') - J(\pi)$, using the state distribution $d^{\pi}$ instead of $d^{\pi'}$, which is known to equal $J(\pi') - J(\pi)$ to first order in the parameters of $\pi'$ on a neighborhood around $\pi$ \cite{Kakade2002}. The bound can therefore be viewed as describing the worst-case approximation error, and it justifies using the approximation as a \textit{surrogate} for $J(\pi') - J(\pi)$. 

Equivalent expressions for the auxiliary costs, based on the upper bound, also follow immediately; we will later use them to make guarantees for the safety of CPO.
\begin{corollary}
For any policies $\pi', \pi$, and any cost function $C_i$, with $\epsilon^{\pi'}_{C_i} \doteq \max_s | \E_{a \sim \pi'} [A^{\pi}_{C_i} (s,a) ] |$, the following bound holds:
\begin{equation}
\begin{aligned}
&J_{C_i} (\pi') - J_{C_i} (\pi)\\
& \leq \frac{1}{1-\gamma} \underset{\begin{subarray}{c} s \sim d^{\pi} \\ a \sim \pi' \end{subarray}}{\E} \left[ A^{\pi}_{C_i} (s,a) + \frac{2\gamma \epsilon^{\pi'}_{C_i}}{1-\gamma}  D_{TV} (\pi'||\pi)[s] \right]. \label{bound3} 
\end{aligned}
\end{equation}
\label{safety-advantage-bound}
\end{corollary}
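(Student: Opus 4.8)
The plan is to obtain (\ref{bound3}) as a near-immediate consequence of Theorem \ref{maintheorem_0}, mirroring the derivation of Corollary \ref{advantage-bound} but using the upper bound $D_{\pi,f}^{+}$ in place of the lower bound, and applying it to the cost $C_i$ rather than the reward. The first observation I would make is that nothing in the statement of Theorem \ref{maintheorem_0} uses any special property of $R$: the reward enters only through $\delta_f(s,a,s') = R(s,a,s') + \gamma f(s') - f(s)$, and the bound is valid for any real-valued function on transitions. Hence the theorem applies verbatim with $R$ replaced by $C_i$ throughout, so that $J$ becomes $J_{C_i}$, the residual becomes $\delta_{C_i,f}(s,a,s') \doteq C_i(s,a,s') + \gamma f(s') - f(s)$, and the associated surrogate and error terms are formed from $\delta_{C_i,f}$.

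With this substitution in hand, I would instantiate the upper bound with the choice $f = V_{C_i}^{\pi}$. The key simplification is the identity $\E_{s'\sim P}[\delta_{V_{C_i}^{\pi}}(s,a,s')] = A_{C_i}^{\pi}(s,a)$, which follows from the definitions of $V_{C_i}^{\pi}$, $Q_{C_i}^{\pi}$, and $A_{C_i}^{\pi}$ exactly as $A^{\pi}(s,a) = \E_{s'\sim P}[R(s,a,s') + \gamma V^{\pi}(s') - V^{\pi}(s)]$ does in the reward case. Carrying the inner expectation over $s'$ through $L_{\pi, V_{C_i}^{\pi}}(\pi')$ turns it into $\E_{s\sim d^{\pi}, a\sim\pi}[(\pi'(a|s)/\pi(a|s) - 1)\, A_{C_i}^{\pi}(s,a)]$. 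I would then split this using importance sampling, $\E_{a\sim\pi}[(\pi'(a|s)/\pi(a|s))\, A_{C_i}^{\pi}(s,a)] = \E_{a\sim\pi'}[A_{C_i}^{\pi}(s,a)]$, together with the zero-mean property $\E_{a\sim\pi}[A_{C_i}^{\pi}(s,a)] = 0$, to conclude $L_{\pi, V_{C_i}^{\pi}}(\pi') = \E_{s\sim d^{\pi}, a\sim\pi'}[A_{C_i}^{\pi}(s,a)]$.

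Finally, I would note that under $f = V_{C_i}^{\pi}$ the error constant reduces to $\epsilon_{V_{C_i}^{\pi}}^{\pi'} = \max_s |\E_{a\sim\pi'}[A_{C_i}^{\pi}(s,a)]| = \epsilon_{C_i}^{\pi'}$, again by the advantage identity. Substituting these into $D_{\pi, V_{C_i}^{\pi}}^{+}(\pi')$, factoring out $(1-\gamma)^{-1}$, and folding the state-only factor $D_{TV}(\pi'||\pi)[s]$ inside the expectation over $(s,a)$ (it does not depend on $a$, so the $\pi'$-marginal over actions leaves it unchanged) yields exactly the right-hand side of (\ref{bound3}). There is no serious obstacle here: the entire argument is the cost-space, upper-bound analogue of Corollary \ref{advantage-bound}, and the only point requiring care is the justification that Theorem \ref{maintheorem_0} transfers unchanged from $R$ to $C_i$ --- which holds precisely because the theorem is agnostic to the meaning of the per-transition signal.
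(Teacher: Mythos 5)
Your proof is correct and is exactly the route the paper intends: the paper states Corollary \ref{safety-advantage-bound} as following ``immediately'' from Theorem \ref{maintheorem_0} applied to the cost $C_i$ in place of $R$, using the upper bound $D^{+}_{\pi,f}$ with $f = V^{\pi}_{C_i}$, which is precisely your argument. Your verification of the simplifications ($L_{\pi,V^{\pi}_{C_i}}(\pi') = \E_{s\sim d^{\pi}, a\sim\pi'}[A^{\pi}_{C_i}(s,a)]$ via importance sampling and the zero-mean advantage property, and $\epsilon^{\pi'}_{V^{\pi}_{C_i}} = \epsilon^{\pi'}_{C_i}$) fills in the details the paper leaves implicit, with no gaps.
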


The bounds we have given so far are in terms of the TV-divergence between policies, but trust region methods constrain the KL-divergence between policies, so bounds that connect performance to the KL-divergence are desirable. We make the connection through Pinsker's inequality \cite{Csiszar1981}: for arbitrary distributions $p,q$, the TV-divergence and KL-divergence are related by $D_{TV} (p||q) \leq \sqrt{D_{KL} (p||q) /2}$. Combining this with Jensen's inequality, we obtain
\begin{align}
\underset{s \sim d^{\pi}}{\E}\left[D_{TV}(\pi'||\pi)[s]\right] &\leq \underset{s \sim d^{\pi}}{\E}\left[\sqrt{\frac{1}{2} D_{KL}(\pi'||\pi)[s]}\right] \nonumber \\
&\leq \sqrt{\frac{1}{2}  \underset{s \sim d^{\pi}}{\E}\left[D_{KL}(\pi'||\pi)[s]\right]} \label{tvkl}
\end{align}

From (\ref{tvkl}) we immediately obtain the following.
\begin{corollary} \label{klbound}
In bounds (\ref{bound}), (\ref{bound2}), and (\ref{bound3}), make the substitution
\begin{equation*}
\underset{s \sim d^{\pi}}{\E}\left[D_{TV}(\pi'||\pi)[s]\right] \to \sqrt{\frac{1}{2}  \underset{s \sim d^{\pi}}{\E}\left[D_{KL}(\pi'||\pi)[s]\right]}.
\end{equation*}
The resulting bounds hold. 
\end{corollary}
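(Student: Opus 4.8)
The plan is to treat this as an immediate consequence of the already-established inequality chain (\ref{tvkl}), which shows that
\[
\underset{s \sim d^{\pi}}{\E}\left[D_{TV}(\pi'||\pi)[s]\right] \leq \sqrt{\frac{1}{2} \underset{s \sim d^{\pi}}{\E}\left[D_{KL}(\pi'||\pi)[s]\right]}.
\]
Thus the proposed substitution replaces $\E_{s\sim d^\pi}[D_{TV}(\pi'||\pi)[s]]$ by a quantity that is provably no smaller. Everything then reduces to a sign check: in each of (\ref{bound}), (\ref{bound2}), and (\ref{bound3}), I must confirm that the TV term enters with a sign and a nonnegative coefficient such that \emph{enlarging} it only weakens the inequality in its stated direction, leaving it valid.

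First I would record that every coefficient multiplying a TV term is nonnegative. Each of $\epsilon_f^{\pi'}$, $\epsilon^{\pi'}$, and $\epsilon^{\pi'}_{C_i}$ is a maximum of an absolute value and hence $\geq 0$, and since $\gamma \in [0,1)$ the prefactors such as $2\gamma/(1-\gamma)^2$ are nonnegative as well. Next I would run through the three bounds. In (\ref{bound}), the upper estimate $D^+_{\pi,f}(\pi')$ \emph{adds} the TV term, so increasing it increases $D^+_{\pi,f}(\pi')$ and the upper bound $D^+_{\pi,f}(\pi')\geq J(\pi')-J(\pi)$ is preserved; the lower estimate $D^-_{\pi,f}(\pi')$ \emph{subtracts} it, so increasing the subtracted quantity lowers $D^-_{\pi,f}(\pi')$ and the lower bound $J(\pi')-J(\pi)\geq D^-_{\pi,f}(\pi')$ is preserved. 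In the lower bound (\ref{bound2}) the TV term is subtracted, so enlarging it lowers the right-hand side and the $\geq$ survives; in the upper bound (\ref{bound3}) it is added, so enlarging it raises the right-hand side and the $\leq$ survives. In every case the substitution loosens the bound in exactly the safe direction.

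The one point that needs care---and the only real obstacle---is that (\ref{tvkl}) is a statement about the \emph{aggregated} expectation $\E_{s\sim d^\pi}[D_{TV}(\pi'||\pi)[s]]$, whereas in (\ref{bound2}) and (\ref{bound3}) the divergence $D_{TV}(\pi'||\pi)[s]$ appears \emph{inside} the joint expectation $\E_{s\sim d^\pi,\, a\sim\pi'}[\cdot]$. Since $D_{TV}(\pi'||\pi)[s]$ does not depend on $a$, I would first split each of these bounds by linearity of expectation, pulling the TV term out into the form (expectation of the advantage) $\pm\ (2\gamma\epsilon/(1-\gamma)^2)\,\E_{s\sim d^\pi}[D_{TV}(\pi'||\pi)[s]]$. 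With the TV term isolated in exactly the shape appearing in (\ref{tvkl}), the substitution becomes unambiguous and the sign argument above applies verbatim, which would complete the argument.
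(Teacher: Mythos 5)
Your proof is correct and takes essentially the same route as the paper: the paper's entire argument is to establish (\ref{tvkl}) via Pinsker's and Jensen's inequalities and then assert the corollary follows ``immediately,'' and the sign check you perform (nonnegative coefficients, the substituted quantity only enlarges an added term or shrinks a subtracted one) is exactly the implicit content of that assertion. Your extra step of pulling $D_{TV}(\pi'||\pi)[s]$ out of the joint expectation over $s \sim d^{\pi}, a \sim \pi'$ in (\ref{bound2}) and (\ref{bound3}) by linearity, since it does not depend on $a$, is a detail the paper leaves unstated but is needed for the substitution to be well-defined.
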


\subsection{Trust Region Methods}

Trust region algorithms for reinforcement learning \cite{Schulman2006, Schulman2016} have policy updates of the form
\begin{equation}
\begin{aligned}
\pi_{k+1} = \arg \max_{\pi \in \Pi_{\theta}} \; & \underset{\begin{subarray}{c}s \sim d^{\pi_k} \\ a \sim \pi\end{subarray}}{\E}\left[A^{\pi_k}(s,a)\right] \\
\text{s.t. } &\bar{D}_{KL} (\pi || \pi_k) \leq \delta,
\end{aligned} \label{trpo}
\end{equation}
where $\bar{D}_{KL}(\pi || \pi_k) = \E_{s\sim\pi_k}\left[D_{KL}(\pi||\pi_k)[s] \right]$, and $\delta > 0$ is the step size. The set $\{\pi_{\theta} \in \Pi_{\theta} : \bar{D}_{KL} (\pi||\pi_k) \leq \delta\}$ is called the \textit{trust region}. 

The primary motivation for this update is that it is an approximation to optimizing the lower bound on policy performance given in (\ref{bound2}), which would guarantee monotonic performance improvements. This is important for optimizing neural network policies, which are known to suffer from performance collapse after bad updates \cite{Duan2016}. Despite the approximation, trust region steps usually give monotonic improvements \cite{Schulman2006, Duan2016} and have shown state-of-the-art performance in the deep RL setting \cite{Duan2016, Gu2017}, making the approach appealing for developing policy search methods for CMDPs.

Until now, the particular choice of trust region for (\ref{trpo}) was heuristically motivated; with (\ref{bound2}) and Corollary \ref{klbound}, we are able to show that it is principled and comes with a worst-case performance degradation guarantee that depends on $\delta$.

\begin{proposition}[Trust Region Update Performance] \label{trpoprop} Suppose $\pi_k$, $\pi_{k+1}$ are related by (\ref{trpo}), and that $\pi_k \in \Pi_{\theta}$. A lower bound on the policy performance difference between $\pi_k$ and $\pi_{k+1}$ is
\begin{equation}
J(\pi_{k+1}) - J(\pi_k) \geq \frac{ -\sqrt{2\delta} \gamma \epsilon^{\pi_{k+1}}}{(1 -\gamma)^2}, \label{trpobound}
\end{equation}
where $\epsilon^{\pi_{k+1}} = \max_s \left| \E_{a \sim \pi_{k+1}} \left[A^{\pi_k}(s,a)\right]\right|$.
\end{proposition}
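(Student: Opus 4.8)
The plan is to assemble the bound directly from the lower bound of Corollary~\ref{advantage-bound} (with the KL substitution licensed by Corollary~\ref{klbound}) and then to exploit the optimality of $\pi_{k+1}$ in the trust region subproblem~(\ref{trpo}). First I would instantiate (\ref{bound2}) at $\pi = \pi_k$ and $\pi' = \pi_{k+1}$ and apply Corollary~\ref{klbound}, which gives
\[
J(\pi_{k+1}) - J(\pi_k) \geq \frac{1}{1-\gamma}\underset{\begin{subarray}{c} s\sim d^{\pi_k} \\ a\sim\pi_{k+1}\end{subarray}}{\E}\left[A^{\pi_k}(s,a)\right] - \frac{2\gamma\epsilon^{\pi_{k+1}}}{(1-\gamma)^2}\sqrt{\frac{1}{2}\underset{s\sim d^{\pi_k}}{\E}\left[D_{KL}(\pi_{k+1}||\pi_k)[s]\right]},
\]
with $\epsilon^{\pi_{k+1}}$ exactly the quantity named in the proposition.

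The key step is to argue that the first term on the right-hand side is nonnegative, so that it may be discarded to produce a valid (looser) lower bound. For any policy $\pi$ and any state $s$ we have $\E_{a\sim\pi}[A^{\pi}(s,a)] = \E_{a\sim\pi}[Q^{\pi}(s,a)] - V^{\pi}(s) = 0$; hence the surrogate objective appearing in~(\ref{trpo}), namely $\E_{s\sim d^{\pi_k}, a\sim\pi}[A^{\pi_k}(s,a)]$, evaluates to zero at $\pi=\pi_k$. Because $\pi_k$ also satisfies the trust region constraint trivially ($\bar D_{KL}(\pi_k||\pi_k)=0\leq\delta$), it is feasible for the subproblem~(\ref{trpo}). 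Since $\pi_{k+1}$ is by hypothesis the \emph{maximizer} of that subproblem, its objective value is at least the value attained by the feasible point $\pi_k$, i.e.\ at least zero; therefore $\E_{s\sim d^{\pi_k}, a\sim\pi_{k+1}}[A^{\pi_k}(s,a)] \geq 0$, and the first term can be dropped.

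It then remains to invoke the trust region constraint itself. The iterate $\pi_{k+1}$ satisfies $\bar D_{KL}(\pi_{k+1}||\pi_k) = \E_{s\sim d^{\pi_k}}[D_{KL}(\pi_{k+1}||\pi_k)[s]] \leq \delta$, so the square root above is bounded by $\sqrt{\delta/2}$. Substituting this and simplifying the constant via $\frac{2\gamma\epsilon^{\pi_{k+1}}}{(1-\gamma)^2}\sqrt{\delta/2} = \frac{\sqrt{2\delta}\,\gamma\epsilon^{\pi_{k+1}}}{(1-\gamma)^2}$ yields precisely the claimed inequality~(\ref{trpobound}).

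I do not expect a genuine obstacle, since this proposition is essentially a specialization of the already-established policy performance bounds. The one point requiring care is the nonnegativity argument: it relies not merely on evaluating the surrogate at $\pi_k$, but on the fact that $\pi_{k+1}$ \emph{optimizes} the surrogate over a feasible set containing $\pi_k$, so that feasibility of $\pi_k$ together with its zero objective value certifies the optimal surrogate value to be $\geq 0$. Everything after that is constant-chasing.
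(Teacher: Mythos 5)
Your proof is correct and follows essentially the same route as the paper's: both establish nonnegativity of the surrogate objective $\E_{s \sim d^{\pi_k}, a \sim \pi_{k+1}}[A^{\pi_k}(s,a)] \geq 0$ from feasibility of $\pi_k$ (with objective value zero) and optimality of $\pi_{k+1}$, then apply the lower bound (\ref{bound2}) together with Corollary~\ref{klbound} and the trust region constraint $\bar{D}_{KL}(\pi_{k+1}||\pi_k) \leq \delta$. You have merely written out explicitly the constant-chasing and the zero-advantage identity that the paper's two-line proof leaves implicit.
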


\begin{proof}
$\pi_k$ is a feasible point of (\ref{trpo}) with objective value 0, so $\E_{s \sim d^{\pi_k}, a \sim \pi_{k+1}} \left[A^{\pi_k}(s,a)\right] \geq 0$. The rest follows by (\ref{bound2}) and Corollary \ref{klbound}, noting that (\ref{trpo}) bounds the average KL-divergence by $\delta$. 
\end{proof}

This result is useful for two reasons: 1) it is of independent interest, as it helps tighten the connection between theory and practice for deep RL, and 2) the choice to develop CPO as a trust region method means that CPO inherits this performance guarantee.

\subsection{Trust Region Optimization for Constrained MDPs}

\textit{Constrained policy optimization} (CPO), which we present and justify in this section, is a policy search algorithm for CMDPs with updates that approximately solve (\ref{cpolopt}) with a particular choice of $D$. First, we describe a policy search update for CMDPs that alleviates the issue of off-policy evaluation, and comes with guarantees of monotonic performance improvement and constraint satisfaction. Then, because the theoretically guaranteed update will take too-small steps in practice, we propose CPO as a practical approximation based on trust region methods. 

By corollaries \ref{advantage-bound}, \ref{safety-advantage-bound}, and \ref{klbound}, for appropriate coefficients $\alpha_k$, $\beta_k^i$ the update
\begin{equation*}\label{cpo_ancestor}
\begin{aligned}
& \pi_{k+1} = \arg \max_{\pi \in \Pi_{\theta}} \;   \smallsurr{\pi_k}{\pi}{A^{\pi_k}(s,a)} - \alpha_k \sqrt{\bar{D}_{KL}(\pi || \pi_k)}  \\
& \text{s.t. } J_{C_i} (\pi_k) + \smallsurr{\pi_k}{\pi}{\frac{A^{\pi_k}_{C_i} (s,a)}{1-\gamma}} + \beta_k^i \sqrt{\bar{D}_{KL}(\pi || \pi_k)} \leq d_i 
\end{aligned}
\end{equation*}
is guaranteed to produce policies with monotonically nondecreasing returns that satisfy the original constraints. (Observe that the constraint here is on an upper bound for $J_{C_i}(\pi)$ by (\ref{bound3}).) The off-policy evaluation issue is alleviated, because both the objective and constraints involve expectations over state distributions $d^{\pi_k}$, which we presume to have samples from. Because the bounds are tight, the problem is always feasible (as long as $\pi_0$ is feasible). However, the penalties on policy divergence are quite steep for discount factors close to 1, so steps taken with this update might be small.

Inspired by trust region methods, we propose CPO, which uses a trust region instead of penalties on policy divergence to enable larger step sizes:
\begin{equation}\label{cpo}
\begin{aligned}
\pi_{k+1} = &\arg \max_{\pi \in \Pi_{\theta}} \;   \smallsurr{\pi_k}{\pi}{A^{\pi_k}(s,a)}  \\
\text{s.t. } & J_{C_i} (\pi_k) + \surrA{\pi_k}{\pi}{A^{\pi_k}_{C_i} (s,a)} \leq d_i \;\;\; \forall i\\
& \bar{D}_{KL} (\pi || \pi_k) \leq \delta.
\end{aligned}
\end{equation}
Because this is a trust region method, it inherits the performance guarantee of Proposition \ref{trpoprop}. Furthermore, by corollaries \ref{safety-advantage-bound} and \ref{klbound}, we have a performance guarantee for approximate satisfaction of constraints:

\begin{proposition}[CPO Update Worst-Case Constraint Violation] \label{cpobound} Suppose $\pi_k, \pi_{k+1}$ are related by (\ref{cpo}), and that $\Pi_{\theta}$ in (\ref{cpo}) is any set of policies with $\pi_k \in \Pi_{\theta}$. An upper bound on the $C_i$-return of $\pi_{k+1}$ is
\begin{equation*}
J_{C_i} (\pi_{k+1}) \leq d_i + \frac{\sqrt{2\delta} \gamma \epsilon^{\pi_{k+1}}_{C_i}}{(1 -\gamma)^2}, 
\end{equation*}
where $\epsilon^{\pi_{k+1}}_{C_i} = \max_s \left| \E_{a \sim \pi_{k+1}} \left[A^{\pi_k}_{C_i} (s,a)\right]\right|$.

\end{proposition}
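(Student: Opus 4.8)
The plan is to follow the template of Proposition~\ref{trpoprop}, combining the feasibility of the CPO update with the upper bound of Corollary~\ref{safety-advantage-bound}. First I would note that $\pi_{k+1}$, being the argmax in (\ref{cpo}), is in particular a feasible point, so it satisfies the surrogate constraint $J_{C_i}(\pi_k) + \surrA{\pi_k}{\pi_{k+1}}{A^{\pi_k}_{C_i}(s,a)} \leq d_i$. This is precisely the combination I will want to bound away at the end, so I set it aside as the first ingredient.

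For the second ingredient I would apply the upper bound (\ref{bound3}) of Corollary~\ref{safety-advantage-bound} with $\pi' = \pi_{k+1}$ and $\pi = \pi_k$. Because the total-variation term there depends only on the state, it factors out of the $a \sim \pi_{k+1}$ expectation, so the bound reads
\[
J_{C_i}(\pi_{k+1}) - J_{C_i}(\pi_k) \leq \surrA{\pi_k}{\pi_{k+1}}{A^{\pi_k}_{C_i}(s,a)} + \frac{2\gamma\epsilon^{\pi_{k+1}}_{C_i}}{(1-\gamma)^2}\underset{s \sim d^{\pi_k}}{\E}\left[D_{TV}(\pi_{k+1}||\pi_k)[s]\right].
\]
Next I would invoke Corollary~\ref{klbound} to replace the average TV-divergence by $\sqrt{\tfrac{1}{2}\bar{D}_{KL}(\pi_{k+1}||\pi_k)}$, and then use the trust-region constraint $\bar{D}_{KL}(\pi_{k+1}||\pi_k) \leq \delta$ from (\ref{cpo}) to bound this by $\sqrt{\delta/2}$. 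Since $\tfrac{2\gamma\epsilon^{\pi_{k+1}}_{C_i}}{(1-\gamma)^2}\sqrt{\delta/2} = \tfrac{\sqrt{2\delta}\,\gamma\epsilon^{\pi_{k+1}}_{C_i}}{(1-\gamma)^2}$, the penalty collapses to exactly the additive slack in the statement.

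Finally I would assemble the two ingredients: rearranging the displayed inequality isolates $J_{C_i}(\pi_{k+1})$ on the left, leaving $\bigl(J_{C_i}(\pi_k) + \surrA{\pi_k}{\pi_{k+1}}{A^{\pi_k}_{C_i}(s,a)}\bigr)$ plus the slack on the right, and the feasibility inequality from the first step bounds the parenthesized quantity by $d_i$, which is the claim. I do not expect a genuine obstacle, since the result is an assembly of Corollaries~\ref{safety-advantage-bound} and~\ref{klbound} against the two constraints of (\ref{cpo}), in exact parallel to Proposition~\ref{trpoprop}. The only points needing care are the bookkeeping that pulls the state-dependent divergence out of the $a \sim \pi_{k+1}$ expectation, the arithmetic identity $\sqrt{\delta/2} = \sqrt{2\delta}/2$, and the observation that the surrogate term in (\ref{bound3}) matches verbatim the one appearing in the CPO constraint (both expectations taken over $d^{\pi_k}$), so they cancel cleanly rather than merely approximately.
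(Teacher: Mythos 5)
Your proof is correct and is exactly the argument the paper intends: the paper dispatches Proposition~\ref{cpobound} with the one-line remark that it follows from Corollaries~\ref{safety-advantage-bound} and~\ref{klbound} together with the constraints of (\ref{cpo}), and your writeup fills in precisely that assembly (feasibility of $\pi_{k+1}$ for the surrogate constraint, bound (\ref{bound3}) with the TV-to-KL substitution, and the trust-region bound $\bar{D}_{KL}(\pi_{k+1}||\pi_k) \leq \delta$), in the same spirit as the proof of Proposition~\ref{trpoprop}. No gaps; the arithmetic $\frac{2\gamma\epsilon^{\pi_{k+1}}_{C_i}}{(1-\gamma)^2}\sqrt{\delta/2} = \frac{\sqrt{2\delta}\,\gamma\epsilon^{\pi_{k+1}}_{C_i}}{(1-\gamma)^2}$ and the cancellation of the surrogate term against $d_i$ are exactly right.
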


\section{Practical Implementation}

In this section, we show how to implement an approximation to the update (\ref{cpo}) that can be efficiently computed, even when optimizing policies with thousands of parameters. To address the issue of approximation and sampling errors that arise in practice, as well as the potential violations described by Proposition \ref{cpobound}, we also propose to tighten the constraints by constraining upper bounds of the auxilliary costs, instead of the auxilliary costs themselves.

\subsection{Approximately Solving the CPO Update}

For policies with high-dimensional parameter spaces like neural networks, (\ref{cpo}) can be impractical to solve directly because of the computational cost. However, for small step sizes $\delta$, the objective and cost constraints are well-approximated by linearizing around $\pi_k$, and the KL-divergence constraint is well-approximated by second order expansion (at $\pi_k = \pi$, the KL-divergence and its gradient are both zero). Denoting the gradient of the objective as $g$, the gradient of constraint $i$ as $b_i$, the Hessian of the KL-divergence as $H$,  and defining $c_i \doteq J_{C_i}(\pi_k) - d_i$, the approximation to (\ref{cpo}) is:
\begin{equation}
\begin{aligned}
\theta_{k+1} =\arg\max_{\theta} \;\;\;&  g^T (\theta - \theta_k)  \\
\text{s.t. } \;\;\;& c_i + b_i^T (\theta - \theta_k) \leq 0 \;\;\;  i=1,...,m \\
& \frac{1}{2} (\theta - \theta_k)^T H (\theta - \theta_k) \leq \delta.
\end{aligned} \label{cpoapprox}
\end{equation}
Because the Fisher information matrix (FIM) $H$ is always positive semi-definite (and we will assume it to be positive-definite in what follows), this optimization problem is convex and, when feasible, can be solved efficiently using duality. (We reserve the case where it is not feasible for the next subsection.) With $B \doteq [b_1 , ..., b_m]$ and $c \doteq [c_1, ..., c_m]^T$, a dual to (\ref{cpoapprox}) can be expressed as
\begin{equation}
\begin{aligned}
\max_{\begin{subarray}{c} \lambda \geq 0 \\ \nu \succeq 0\end{subarray}} \frac{-1}{2\lambda} \left( g^T H^{-1} g - 2 r^T \nu + \nu^T S \nu\right) + \nu^T c - \frac{\lambda \delta}{2},
\end{aligned} \label{cpodual}
\end{equation}
where $r\doteq g^T H^{-1} B$, $S\doteq B^T H^{-1} B$. This is a convex program in $m+1$ variables; when the number of constraints is small by comparison to the dimension of $\theta$, this is much easier to solve than (\ref{cpoapprox}). If $\lambda^*, \nu^*$ are a solution to the dual, the solution to the primal is
\begin{equation}
\theta^* = \theta_k + \frac{1}{\lambda^*} H^{-1} \left( g - B \nu^*\right). \label{proposed}
\end{equation}
Our algorithm solves the dual for $\lambda^*, \nu^*$ and uses it to propose the policy update (\ref{proposed}). For the special case where there is only one constraint, we give an analytical solution in the supplementary material (Theorem \ref{thmlqclp}) which removes the need for an inner-loop optimization. Our experiments have only  a single constraint, and make use of the analytical solution. 

Because of approximation error, the proposed update may not satisfy the constraints in (\ref{cpo}); a backtracking line search is used to ensure surrogate constraint satisfaction. Also, for high-dimensional policies, it is impractically expensive to invert the FIM. This poses a challenge for computing $H^{-1}g$ and $H^{-1} b_i$, which appear in the dual. Like \cite{Schulman2006}, we approximately compute them using the conjugate gradient method.

%
%

\subsection{Feasibility}

Due to approximation errors, CPO may take a bad step and produce an infeasible iterate $\pi_k$. Sometimes (\ref{cpoapprox}) will still be feasible and CPO can automatically recover from its bad step, but for the infeasible case, a recovery method is necessary. In our experiments, where we only have one constraint, we recover by proposing an update to purely decrease the constraint value:
\begin{equation}
\theta^* = \theta_k - \sqrt{\frac{2\delta}{b^T H^{-1} b}} H^{-1} b. \label{infeasibleproposal}
\end{equation}
As before, this is followed by a line search. This approach is principled in that it uses the limiting search direction as the intersection of the trust region and the constraint region shrinks to zero. We give the pseudocode for our algorithm (for the single-constraint case) as Algorithm \ref{alg1}. 

\begin{algorithm}[tb]
   \caption{Constrained Policy Optimization}
   \label{alg1}
\begin{algorithmic}
   \STATE {\bfseries Input:} Initial policy $\pi_0 \in \Pi_{\theta}$ tolerance $\alpha$
	 \FOR{$k = 0,1,2,...$} 
	 \STATE Sample a set of trajectories $\calD = \{\tau\} \sim \pi_k = \pi(\theta_k)$
	 \STATE Form sample estimates $\hat{g}, \hat{b}, \hat{H}, \hat{c}$ with $\calD$
	 \IF{approximate CPO is feasible}
	 	\STATE Solve dual problem (\ref{cpodual}) for $\lambda^*_k, \nu^*_k$ 
	 	\STATE Compute policy proposal $\theta^*$ with (\ref{proposed})
	 \ELSE
	 	\STATE Compute recovery policy proposal $\theta^*$ with (\ref{infeasibleproposal}) 
	 \ENDIF
	 \STATE Obtain $\theta_{k+1}$ by backtracking linesearch  to enforce satisfaction of sample estimates of constraints in (\ref{cpo})
	\ENDFOR
\end{algorithmic}
\end{algorithm}

\subsection{Tightening Constraints via Cost Shaping}

Because of the various approximations between (\ref{cpolopt}) and our practical algorithm, it is important to build a factor of safety into the algorithm to minimize the chance of constraint violations. To this end, we choose to constrain upper bounds on the original constraints, $C_i^+$, instead of the original constraints themselves. We do this by cost shaping:
\begin{equation}
C_i^+ (s,a,s') = C_i (s,a,s') + \Delta_i (s,a,s'), \label{safetyfactor}
\end{equation}
where $\Delta_i : S \times A \times S \to \Realp{}$ correlates in some useful way with $C_i$. 

In our experiments, where we have only one constraint, we partition states into \textit{safe states} and \textit{unsafe states}, and the agent suffers a safety cost of $1$ for being in an unsafe state. We choose $\Delta$ to be the probability of entering an unsafe state within a fixed time horizon, according to a learned model that is updated at each iteration. This choice confers the additional benefit of smoothing out sparse constraints.

\section{Connections to Prior Work}

Our method has similar policy updates to primal-dual methods like those proposed by Chow et al. \yrcite{Chow2015}, but crucially, we differ in computing the dual variables (the Lagrange multipliers for the constraints). In primal-dual optimization (PDO), dual variables are stateful and learned concurrently with the primal variables \cite{Boyd2003}. In a  PDO algorithm for solving (\ref{cpolopt}), dual variables would be updated according to
\begin{equation}
\nu_{k+1} = \left(\nu_k + \alpha_k \left(J_C (\pi_k) - d\right)\right)_+, \label{pdodual}
\end{equation} 
where $\alpha_k$ is a learning rate. In this approach, intermediary policies are not guaranteed to satisfy constraints---only the policy at convergence is. By contrast, CPO computes new dual variables from scratch at each update to exactly enforce constraints. 

\begin{figure*}[t]
\centering
Returns:

\begin{subfigure}{.2\textwidth}
  \centering
  \includegraphics[width=\linewidth, trim={0.5cm, 0.5cm, 0.5cm, 0}, clip]{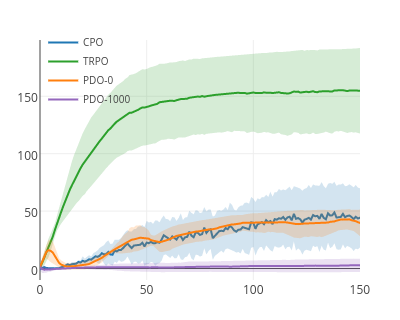}
\end{subfigure}%
\begin{subfigure}{.2\textwidth}
  \centering
  \includegraphics[width=\linewidth, trim={0.5cm, 0.5cm, 0.5cm, 0}, clip]{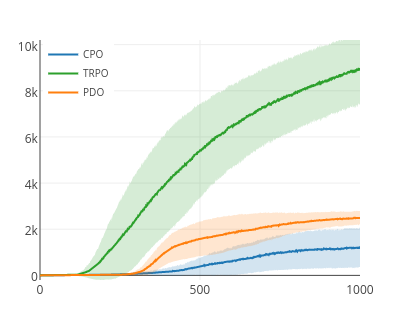}
\end{subfigure}%
\begin{subfigure}{.2\textwidth}
  \centering
  \includegraphics[width=\linewidth, trim={0.4cm, 0.5cm, 0.5cm, 0}, clip]{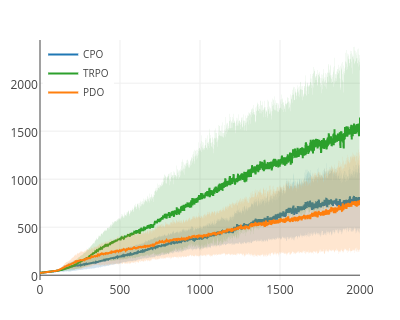}
\end{subfigure}%
\begin{subfigure}{.2\textwidth}
  \centering
  \includegraphics[width=\linewidth, trim={0.5cm, 0.5cm, 0.5cm, 0}, clip]{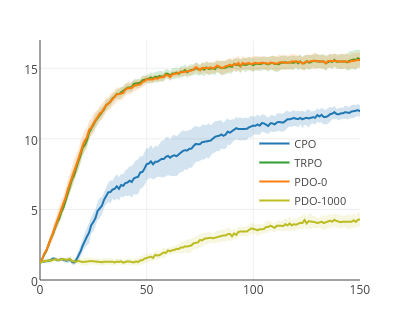}
\end{subfigure}%
\begin{subfigure}{.2\textwidth}
  \centering
  \includegraphics[width=\linewidth, trim={0.5cm, 0.5cm, 0.5cm, 0}, clip]{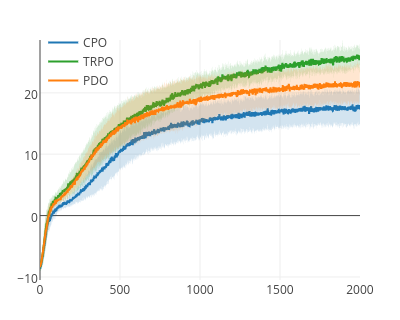}
\end{subfigure}

\rule{\linewidth}{0.5pt}
Constraint values: (closer to the limit is better)

\begin{subfigure}{.2\textwidth}
  \centering
  \includegraphics[width=\linewidth, trim={0.5cm, 0.5cm, 0.5cm, 0}, clip]{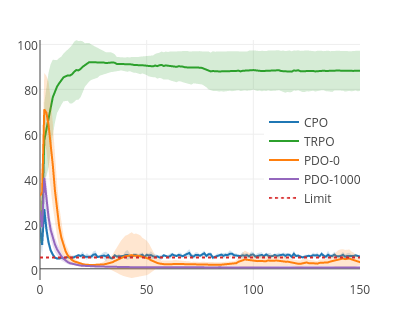}
  \caption{Point-Circle}
\end{subfigure}%
\begin{subfigure}{.2\textwidth}
  \centering
  \includegraphics[width=\linewidth, trim={0.5cm, 0.5cm, 0.5cm, 0}, clip]{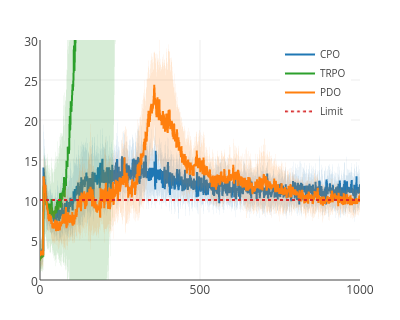}
  \caption{Ant-Circle}
\end{subfigure}%
\begin{subfigure}{.2\textwidth}
  \centering
  \includegraphics[width=\linewidth, trim={0.5cm, 0.5cm, 0.5cm, 0}, clip]{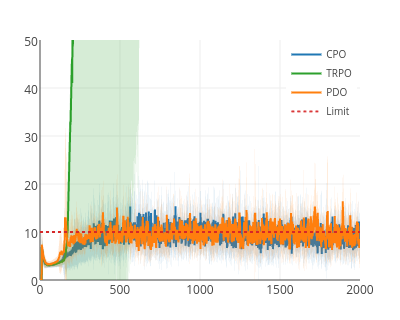}
  \caption{Humanoid-Circle}
\end{subfigure}%
\begin{subfigure}{.2\textwidth}
  \centering
  \includegraphics[width=\linewidth, trim={0.5cm, 0.5cm, 0.5cm, 0}, clip]{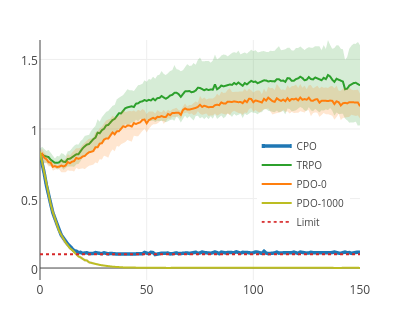}
  \caption{Point-Gather}
\end{subfigure}%
\begin{subfigure}{.2\textwidth}
  \centering
  \includegraphics[width=\linewidth, trim={0.5cm, 0.5cm, 0.5cm, 0}, clip]{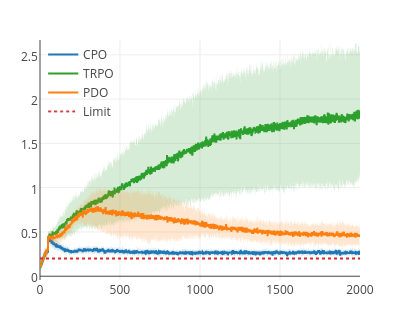}
  \caption{Ant-Gather}
\end{subfigure}
\caption[]{Average performance for CPO, PDO, and TRPO over several seeds (5 in the Point environments, 10 in all others); the $x$-axis is training iteration. CPO drives the constraint function almost directly to the limit in all experiments, while PDO frequently suffers from over- or under-correction. TRPO is included to verify that optimal unconstrained behaviors are infeasible for the constrained problem.}
\label{main_results}
\end{figure*}

\section{Experiments}

In our experiments, we aim to answer the following:
\begin{itemize}
\item Does CPO succeed at enforcing behavioral constraints when training neural network policies with thousands of parameters? 
\item How does CPO compare with a baseline that uses primal-dual optimization? Does CPO behave better with respect to constraints?
\item How much does it help to constrain a cost upper bound (\ref{safetyfactor}), instead of directly constraining the cost?
\item What benefits are conferred by using constraints instead of fixed penalties?
\end{itemize}

We designed experiments that are easy to interpret and motivated by safety. We consider two tasks, and train multiple different agents (robots) for each task:
\begin{itemize}
\item \textbf{Circle}: The agent is rewarded for running in a wide circle, but is constrained to stay within a safe region smaller than the radius of the target circle. 
\item \textbf{Gather}: The agent is rewarded for collecting green apples, and constrained to avoid red bombs. 
\end{itemize}
For the Circle task, the exact geometry is illustrated in Figure \ref{geom} in the supplementary material. Note that there are no physical walls: the agent only interacts with boundaries through the constraint costs. The reward and constraint cost functions are described in supplementary material (Section \ref{envirosec}). In each of these tasks, we have only one constraint; we refer to it as $C$ and its upper bound from (\ref{safetyfactor}) as $C^+$. 

We experiment with three different agents: a point-mass $(S \subseteq \Real{9}, A \subseteq \Real{2})$, a quadruped robot (called an `ant') $(S \subseteq \Real{32}, A \subseteq \Real{8})$, and a simple humanoid $(S \subseteq \Real{102}, A \subseteq \Real{10})$. We train all agent-task combinations except for Humanoid-Gather. 

For all experiments, we use neural network policies with two hidden layers of size $(64,32)$. Our experiments are implemented in rllab \cite{Duan2016}.

\begin{figure}[t]
\centering

\begin{subfigure}{.22\textwidth}
  \centering
  \includegraphics[height=0.9\linewidth]{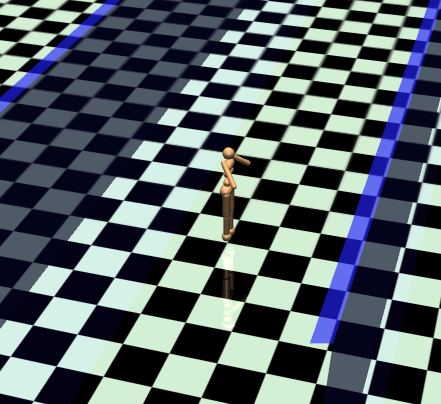}
  \caption{Humanoid-Circle}
\end{subfigure}%
\begin{subfigure}{.22\textwidth}
  \centering
  \includegraphics[height=0.9\linewidth]{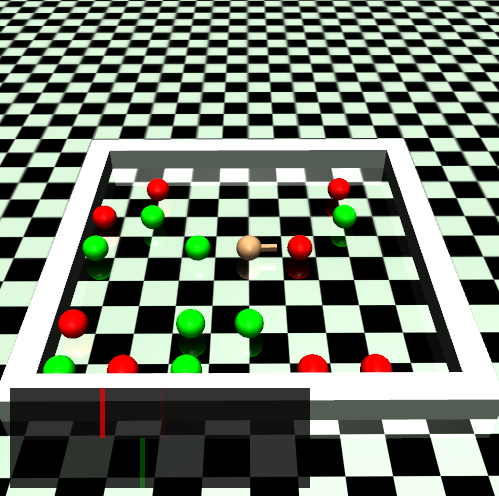}
  \caption{Point-Gather}
\end{subfigure}
\caption[]{The Humanoid-Circle and Point-Gather environments. In Humanoid-Circle, the safe area is between the blue panels. }
\label{graphics}
\end{figure}

\subsection{Evaluating CPO and Comparison Analysis}

Learning curves for CPO and PDO are compiled in Figure \ref{main_results}. Note that we evaluate algorithm performance based on the $C^+$ return, instead of the $C$ return (except for in Point-Gather, where we did not use cost shaping due to that environment's short time horizon), because this is what the algorithm actually constrains in these experiments. 

For our comparison, we implement PDO with (\ref{pdodual}) as the update rule for the dual variables, using a constant learning rate $\alpha$; details are available in supplementary material (Section \ref{pdoimplement}). We emphasize that in order for the comparison to be fair, we give PDO every advantage that is given to CPO, including equivalent trust region policy updates. To benchmark the environments, we also include TRPO (trust region policy optimization) \cite{Schulman2006}, a state-of-the-art \textit{unconstrained} reinforcement learning algorithm. The TRPO experiments show that optimal unconstrained behaviors for these environments are constraint-violating.

We find that CPO is successful at approximately enforcing constraints in all environments. In the simpler environments (Point-Circle and Point-Gather), CPO tracks the constraint return almost \textit{exactly} to the limit value.

By contrast, although PDO usually converges to constraint-satisfying policies in the end, it is not consistently constraint-satisfying throughout training (as expected). For example, see the spike in constraint value that it experiences in Ant-Circle. Additionally, PDO is sensitive to the initialization of the dual variable. By default, we initialize $\nu_0 = 0$, which exploits no prior knowledge about the environment and makes sense when the initial policies are feasible. However, it may seem appealing to set $\nu_0$ high, which would make PDO more conservative with respect to the constraint; PDO could then decrease $\nu$ as necessary after the fact. In the Point environments, we experiment with $\nu_0 = 1000$ and show that although this does assure constraint satisfaction, it also can substantially harm performance with respect to return. Furthermore, we argue that this is not adequate in general: after the dual variable decreases, the agent could learn a new behavior that increases the correct dual variable more quickly than PDO can attain it (as happens in Ant-Circle for PDO; observe that performance is approximately constraint-satisfying until the agent learns how to run at around iteration 350).

We find that CPO generally outperforms PDO on enforcing constraints, without compromising performance with respect to return. CPO quickly stabilizes the constraint return around to the limit value, while PDO is not consistently able to enforce constraints all throughout training.

\subsection{Ablation on Cost Shaping}

In Figure \ref{cs-results}, we compare performance of CPO with and without cost shaping in the constraint. Our metric for comparison is the $C$-return, the `true' constraint. The cost shaping does help, almost completely accounting for CPO's inherent approximation errors. However, CPO is nearly constraint-satisfying even without cost shaping.

\begin{figure}[t]
\centering
\begin{subfigure}{.22\textwidth}
  \centering
  \includegraphics[height=0.9\linewidth]{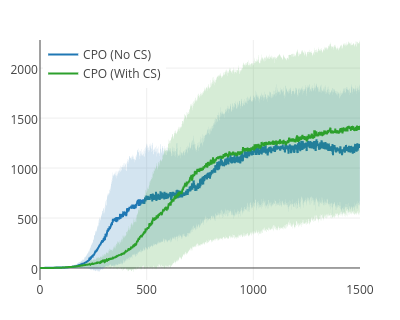}
  \caption{Ant-Circle Return}
\end{subfigure}%
\begin{subfigure}{.22\textwidth}
  \centering
  \includegraphics[height=0.9\linewidth]{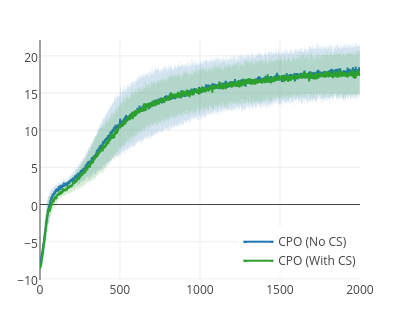}
  \caption{Ant-Gather Return}
\end{subfigure}%

\begin{subfigure}{.22\textwidth}
  \centering
  \includegraphics[height=0.9\linewidth]{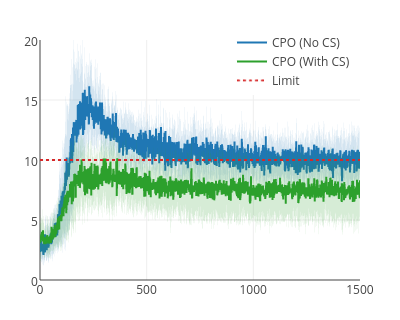}
  \caption{Ant-Circle $C$-Return}
\end{subfigure}%
\begin{subfigure}{.22\textwidth}
  \centering
  \includegraphics[height=0.9\linewidth]{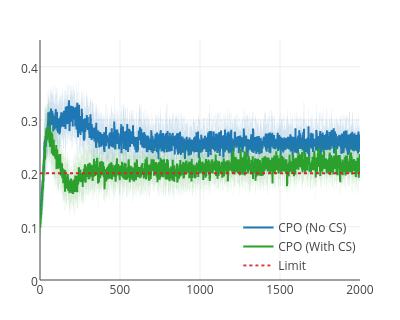}
  \caption{Ant-Gather $C$-Return}
\end{subfigure}%
\caption[]{Using cost shaping (CS) in the constraint while optimizing generally improves the agent's adherence to the true constraint on $C$-return.}
\label{cs-results}
\end{figure}

\subsection{Constraint vs. Fixed Penalty}

\begin{figure}[t]
\centering
\begin{subfigure}{.22\textwidth}
  \centering
  \includegraphics[height=0.9\linewidth]{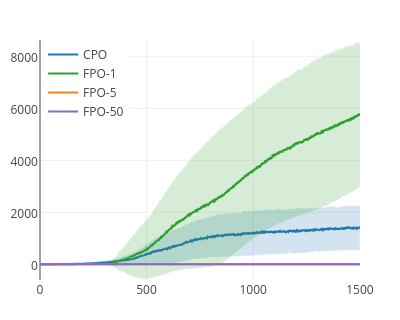}
  \caption{Ant-Circle Return}
\end{subfigure}%
\begin{subfigure}{.22\textwidth}
  \centering
  \includegraphics[height=0.9\linewidth]{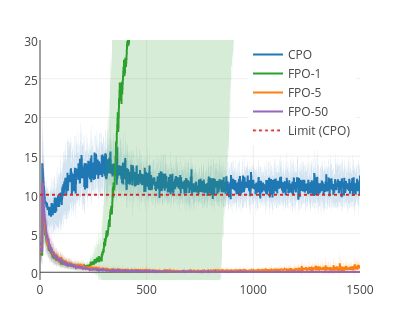}
  \caption{Ant-Circle $C^+$-Return}
\end{subfigure}%
\caption[]{Comparison between CPO and FPO (fixed penalty optimization) for various values of fixed penalty. }
\label{FPO}
\end{figure}

In Figure \ref{FPO}, we compare CPO to a fixed penalty method, where policies are learned using TRPO with rewards  $R(s,a,s') + \lambda C^+ (s,a,s')$ for $\lambda \in \{1, 5, 50\}$. 

We find that fixed penalty methods can be highly sensitive to the choice of penalty coefficient: in Ant-Circle, a penalty coefficient of 1 results in reward-maximizing policies that accumulate massive constraint costs, while a coefficient of 5 (less than an order of magnitude difference) results in cost-minimizing policies that never learn how to acquire any rewards.  In contrast, CPO automatically picks penalty coefficients to attain the desired trade-off between reward and constraint cost.

\section{Discussion}

In this article, we showed that a particular optimization problem results in policy updates that are guaranteed to both improve return and satisfy constraints. This enabled the development of CPO, our policy search algorithm for CMDPs, which approximates the theoretically-guaranteed algorithm in a principled way. We demonstrated that CPO can train neural network policies with thousands of parameters on high-dimensional constrained control tasks, simultaneously maximizing reward and approximately satisfying constraints. Our work represents a step towards applying reinforcement learning in the real world, where constraints on agent behavior are sometimes necessary for the sake of safety.

\section*{Acknowledgements} 

The authors would like to acknowledge Peter Chen, who independently and concurrently derived an equivalent policy improvement bound. 

Joshua Achiam is supported by TRUST (Team for Research in Ubiquitous Secure Technology) which receives support from NSF (award number CCF-0424422). This project also received support from Berkeley Deep Drive and from Siemens.

\bibliography{cpo_bibliography}
\bibliographystyle{icml2017}

\onecolumn
\section{Appendix} \label{appendix}

\subsection{Proof of Policy Performance Bound}
\theoremstyle{remark}
\newtheorem*{remark}{Remark}

\subsubsection{Preliminaries}

Our analysis will make extensive use of the discounted future state distribution, $d^{\pi}$, which is defined as
\begin{equation*}
d^{\pi} (s) = (1-\gamma) \sum_{t=0}^{\infty} \gamma^t P(s_t = s | \pi).
\end{equation*}
It allows us to express the expected discounted total reward  compactly as
\begin{equation}
J(\pi) = \frac{1}{1-\gamma} \underset{\begin{subarray}{c} s \sim d^{\pi} \\ a \sim \pi \\ s' \sim P\end{subarray}}{\E} \left[ R(s,a,s') \right], \label{jpi}
\end{equation}
where by $a \sim \pi$, we mean $a \sim \pi(\cdot|s)$, and by $s' \sim P$, we mean $s' \sim P(\cdot|s,a)$. We drop the explicit notation for the sake of reducing clutter, but it should be clear from context that $a$ and $s'$ depend on $s$. 

First, we examine some useful properties of $d^{\pi}$ that become apparent in vector form for finite state spaces. Let $p^t_{\pi} \in \Real{|S|}$ denote the vector with components $p^t_{\pi} (s) = P(s_t = s | \pi)$, and let $P_{\pi} \in \Real{|S|\times|S|}$ denote the transition matrix with components $P_{\pi} (s'|s) = \int da P(s'|s,a) \pi(a|s)$; then $p^t_{\pi} = P_{\pi} p^{t-1}_{\pi} = P^t_{\pi} \mu$ and 
\begin{eqnarray}
d^{\pi} &=& (1-\gamma) \sum_{t=0}^{\infty} (\gamma P_{\pi})^t \mu \nonumber \\
&=& (1-\gamma) (I - \gamma P_{\pi})^{-1} \mu. \label{dpi}
\end{eqnarray} 

This formulation helps us easily obtain the following lemma.

\begin{lemma} For any function $f : S \to \Real{}$ and any policy $\pi$,
\begin{equation}
(1-\gamma) \underset{s \sim \mu}{\E}\left[f(s)\right] + \underset{\begin{subarray}{c} s \sim d^{\pi} \\ a \sim \pi \\ s' \sim P\end{subarray}}{\E} \left[ \gamma f(s') \right] - \underset{s \sim d^{\pi}}{\E}\left[f(s)\right] = 0. 
\end{equation}
\end{lemma}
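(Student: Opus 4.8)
The plan is to reduce the identity to a single linear-algebraic statement by rewriting all three terms in the vector notation set up in the preliminaries, exploiting the closed form (\ref{dpi}) for $d^{\pi}$. Throughout I treat $f \in \Real{|S|}$ as the column vector with entries $f(s)$, and $d^{\pi}, \mu \in \Real{|S|}$ as column vectors; this is legitimate for finite state spaces, and the continuous case is identical after replacing sums by integrals.

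First I would express each expectation as an inner product. The first and third terms are immediate: $(1-\gamma)\E_{s\sim\mu}[f(s)] = (1-\gamma)\mu^{\top} f$ and $\E_{s\sim d^{\pi}}[f(s)] = (d^{\pi})^{\top} f$. The middle term requires pushing the inner expectations over $a\sim\pi$ and $s'\sim P$ through the definition of $P_{\pi}$: marginalizing the action out of $P(s'|s,a)\pi(a|s)$ produces exactly $P_{\pi}(s'|s)$, so that $\E_{s\sim d^{\pi}, a\sim\pi, s'\sim P}[\gamma f(s')] = \gamma \sum_s d^{\pi}(s) \sum_{s'} P_{\pi}(s'|s) f(s') = \gamma (P_{\pi} d^{\pi})^{\top} f$, where the last equality uses that summing $f(s')$ against $P_{\pi}(s'|s)$ contracts the first (next-state) index of $P_{\pi}$, i.e. it is $(d^{\pi})^{\top} P_{\pi}^{\top} f = (P_{\pi} d^{\pi})^{\top} f$.

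Collecting the three terms gives $\mathrm{LHS} = \big[(1-\gamma)\mu + \gamma P_{\pi} d^{\pi} - d^{\pi}\big]^{\top} f$, so it suffices to show the bracketed vector vanishes, i.e. $(I - \gamma P_{\pi}) d^{\pi} = (1-\gamma)\mu$. This is exactly (\ref{dpi}) rearranged: multiplying $d^{\pi} = (1-\gamma)(I - \gamma P_{\pi})^{-1}\mu$ on the left by $(I-\gamma P_{\pi})$ yields the claim, and since it holds as a vector identity the scalar expression vanishes for every choice of $f$.

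The only genuinely delicate step is the transpose bookkeeping in the middle term: one must verify that averaging over actions collapses $P(s'|s,a)\pi(a|s)$ into precisely the matrix $P_{\pi}$ appearing in (\ref{dpi}), and that the contraction against $f$ produces $P_{\pi} d^{\pi}$ rather than $P_{\pi}^{\top} d^{\pi}$. Everything else is direct substitution, and the invertibility of $I-\gamma P_{\pi}$ (guaranteed since $\gamma < 1$ and $P_{\pi}$ is row-stochastic) is what makes both (\ref{dpi}) and the argument well-posed.
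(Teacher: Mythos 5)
Your proof is correct and is essentially the paper's own argument: the paper disposes of this lemma in one line by multiplying (\ref{dpi}) on the left by $(I-\gamma P_{\pi})$ and taking the inner product with $f$, which is exactly the vector identity $(I-\gamma P_{\pi})d^{\pi} = (1-\gamma)\mu$ that you reduce everything to, with your expansion of the middle expectation into $\gamma(P_{\pi}d^{\pi})^{\top}f$ simply making the paper's implicit transpose bookkeeping explicit. One trivial nitpick: under the paper's indexing convention ($P_{\pi}(s'|s)$ sits in row $s'$, column $s$, so that $p^{t}_{\pi} = P_{\pi}p^{t-1}_{\pi}$) the matrix $P_{\pi}$ is column-stochastic rather than row-stochastic, but this has no bearing on the spectral-radius argument for invertibility of $I-\gamma P_{\pi}$.
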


\begin{proof} Multiply both sides of (\ref{dpi}) by $(I - \gamma P_{\pi})$ and take the inner product with the vector $f \in \Real{|S|}$. 
\end{proof}

Combining this with (\ref{jpi}), we obtain the following, for any function $f$ and any policy $\pi$:
\begin{equation}
J(\pi) = \underset{s \sim \mu}{\E}[f(s)] + \frac{1}{1-\gamma} \underset{\begin{subarray}{c} s \sim d^{\pi} \\ a \sim \pi \\ s' \sim P\end{subarray}}{\E} \left[R(s,a,s') + \gamma f(s') - f(s)\right]. \label{jpif}
\end{equation}

This identity is nice for two reasons. First: if we pick $f$ to be an approximator of the value function $V^{\pi}$, then (\ref{jpif}) relates the true discounted return of the policy ($J(\pi)$) to the estimate of the policy return ($\E_{s\sim \mu}[f(s)]$) and to the on-policy average TD-error of the approximator; this is aesthetically satisfying. Second: it shows that reward-shaping by $\gamma f(s') - f(s)$ has the effect of translating the total discounted return by $\E_{s\sim \mu} [f(s)]$, a fixed constant independent of policy; this illustrates the finding of Ng. et~al. \yrcite{Ng1999} that reward shaping by $\gamma f(s') + f(s)$ does not change the optimal policy.

It is also helpful to introduce an identity for the vector difference of the discounted future state visitation distributions on two different policies, $\pi'$ and $\pi$. Define the matrices $G \doteq (I - \gamma P_{\pi})^{-1}$, $\bar{G} \doteq (I - \gamma P_{\pi'})^{-1}$, and $\Delta = P_{\pi'} - P_{\pi}$. Then:
\begin{eqnarray*}
G^{-1} - \bar{G}^{-1} &=& (I - \gamma P_{\pi}) - (I - \gamma P_{\pi'}) \\
&=& \gamma \Delta;
\end{eqnarray*} 
left-multiplying by $G$ and right-multiplying by $\bar{G}$, we obtain
\begin{equation*}
\bar{G}  - G = \gamma \bar{G} \Delta G. 
\end{equation*}

Thus
\begin{eqnarray}
d^{\pi'} - d^{\pi} &=& (1-\gamma) \left(\bar{G} - G\right) \mu \nonumber \\
&=& \gamma (1-\gamma) \bar{G} \Delta G \mu \nonumber \\
&=& \gamma \bar{G} \Delta d^{\pi}. \label{dpidiff}
\end{eqnarray}

For simplicity in what follows, we will only consider MDPs with finite state and action spaces, although our attention is on MDPs that are too large for tabular methods. 

\subsubsection{Main Results}

In this section, we will derive and present the new policy improvement bound. We will begin with a lemma:

\begin{lemma}\label{policybound0} For any function $f: S \to \Real{}$ and any policies $\pi'$ and $\pi$, define
\begin{equation}
L_{\pi,f} (\pi') \doteq \underset{\begin{subarray}{c} s \sim d^{\pi} \\ a \sim \pi \\ s' \sim P\end{subarray}}{\E} \left[ \left(\frac{\pi'(a|s)}{\pi(a|s)} - 1 \right) \left(R(s,a,s') + \gamma f(s') - f(s) \right)\right],\label{surrogate}
\end{equation}
and $\epsilon_f^{\pi'} \doteq \max_s \left| \E_{a \sim \pi', s'\sim P} [R(s,a,s') + \gamma f(s') - f(s)] \right|$. Then the following bounds hold:
\begin{align}
J(\pi') - J(\pi) \geq \frac{1}{1-\gamma}\left(L_{\pi,f} (\pi') - 2\epsilon_f^{\pi'} D_{TV} (d^{\pi'} || d^{\pi})\right), \label{bound0} \\
J(\pi') - J(\pi) \leq \frac{1}{1-\gamma}\left(L_{\pi,f} (\pi') + 2\epsilon_f^{\pi'} D_{TV} (d^{\pi'} || d^{\pi})\right), 
\label{bound0b}
\end{align}
where $D_{TV}$ is the total variational divergence. Furthermore, the bounds are tight (when $\pi' = \pi$, the LHS and RHS are identically zero). 
\end{lemma}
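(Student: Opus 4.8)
The plan is to convert the performance difference $J(\pi') - J(\pi)$ into an \emph{exact} identity consisting of $L_{\pi,f}(\pi')$ plus a single remainder term, and then control that remainder with H\"older's inequality. First I would apply identity (\ref{jpif}) to both $\pi'$ and $\pi$ with the \emph{same} $f$. Since the leading term $\E_{s\sim\mu}[f(s)]$ depends only on $f$ and $\mu$, it cancels in the difference. Writing $\delta_f(s,a,s') \doteq R(s,a,s') + \gamma f(s') - f(s)$ and abbreviating the expected shaped reward at a state by $\bar{\delta}_f^{\rho}(s) \doteq \E_{a\sim\rho,\, s'\sim P}[\delta_f(s,a,s')]$ (so that by definition $\epsilon_f^{\pi'} = \max_s |\bar{\delta}_f^{\pi'}(s)| = \|\bar{\delta}_f^{\pi'}\|_\infty$), this leaves
\[
J(\pi') - J(\pi) = \frac{1}{1-\gamma}\left( \underset{s \sim d^{\pi'}}{\E}[\bar{\delta}_f^{\pi'}(s)] - \underset{s \sim d^{\pi}}{\E}[\bar{\delta}_f^{\pi}(s)] \right).
\]

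Next I would pass to the finite-dimensional vector picture from the preliminaries, regarding $d^{\pi}, d^{\pi'} \in \Real{|S|}$ and $\bar{\delta}_f^{\pi'}$ as a vector indexed by states, so the expectations become inner products. The key algebraic move is to simplify $L_{\pi,f}(\pi')$ by importance sampling: since $\E_{a\sim\pi}[(\pi'(a|s)/\pi(a|s))\,\delta_f] = \E_{a\sim\pi'}[\delta_f] = \bar{\delta}_f^{\pi'}(s)$ and the subtracted $1$ contributes $\E_{a\sim\pi}[\delta_f] = \bar{\delta}_f^{\pi}(s)$, one obtains $L_{\pi,f}(\pi') = \E_{s\sim d^{\pi}}[\bar{\delta}_f^{\pi'}(s) - \bar{\delta}_f^{\pi}(s)]$. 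Substituting this into the display above and cancelling the $\bar{\delta}_f^{\pi}$ contributions yields the exact identity
\[
J(\pi') - J(\pi) = \frac{1}{1-\gamma}\left( L_{\pi,f}(\pi') + \langle d^{\pi'} - d^{\pi},\, \bar{\delta}_f^{\pi'}\rangle \right),
\]
whose remainder is the single inner product $\langle d^{\pi'} - d^{\pi}, \bar{\delta}_f^{\pi'}\rangle$.

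Finally I would bound the remainder by H\"older's inequality, $|\langle d^{\pi'} - d^{\pi}, \bar{\delta}_f^{\pi'}\rangle| \leq \|d^{\pi'} - d^{\pi}\|_1\, \|\bar{\delta}_f^{\pi'}\|_\infty$, and identify the two factors: $\|\bar{\delta}_f^{\pi'}\|_\infty = \epsilon_f^{\pi'}$ by definition, and $\|d^{\pi'} - d^{\pi}\|_1 = 2 D_{TV}(d^{\pi'}\| d^{\pi})$ since total variation is one-half the $\ell_1$ distance between probability vectors. Applying this two-sided estimate to the exact identity delivers (\ref{bound0}) and (\ref{bound0b}) at once. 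Tightness is then immediate: at $\pi' = \pi$ the ratio $\pi'/\pi \equiv 1$ makes $L_{\pi,f}(\pi) = 0$, the state distributions coincide so $D_{TV}(d^{\pi}\| d^{\pi}) = 0$, and $J(\pi) - J(\pi) = 0$.

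The one place that requires care — and the crux of obtaining a clean bound — is the bookkeeping that makes $\bar{\delta}_f^{\pi'}$, the TD-error averaged under the \emph{new} policy's actions, rather than some mixture of $\pi$ and $\pi'$ terms, the vector that multiplies $d^{\pi'} - d^{\pi}$. This is precisely what the ``$-1$'' in the definition of $L_{\pi,f}$ arranges, and it is what allows the $\max_s$ appearing in $\epsilon_f^{\pi'}$ to be taken over $\pi'$-expectations, matching the definition in the statement. The remaining manipulations are routine.
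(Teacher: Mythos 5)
Your proposal is correct and follows essentially the same route as the paper's proof: both start from identity (\ref{jpif}) applied to $\pi'$ and $\pi$ with a common $f$, isolate the remainder $\left\langle d^{\pi'} - d^{\pi}, \bar{\delta}_f^{\pi'} \right\rangle$ (the paper splits $\langle d^{\pi'}, \bar{\delta}_f^{\pi'}\rangle$ first and recognizes $L_{\pi,f}$ via importance sampling at the end, whereas you expand $L_{\pi,f}$ first and substitute---the same decomposition in a different order), and both finish with H\"older's inequality with $p=1$, $q=\infty$ and the identification $\|d^{\pi'}-d^{\pi}\|_1 = 2D_{TV}(d^{\pi'}\|d^{\pi})$, $\|\bar{\delta}_f^{\pi'}\|_\infty = \epsilon_f^{\pi'}$. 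No gaps.
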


\begin{proof} First, for notational convenience, let $\delta_f (s,a,s') \doteq R(s,a,s') + \gamma f(s') - f(s)$. (The choice of $\delta$ to denote this quantity is intentionally suggestive---this bears a strong resemblance to a TD-error.) By (\ref{jpif}), we obtain the identity
\begin{equation*}
J(\pi') - J(\pi) = \frac{1}{1-\gamma} \left(\underset{\begin{subarray}{c} s \sim d^{\pi'} \\ a \sim \pi' \\ s' \sim P\end{subarray}}{\E} \left[ \delta_f (s,a,s') \right] - \underset{\begin{subarray}{c} s \sim d^{\pi} \\ a \sim \pi \\ s' \sim P\end{subarray}}{\E} \left[ \delta_f(s,a,s') \right].\right)
\end{equation*}

Now, we restrict our attention to the first term in this equation. Let $\bar{\delta}_f^{\pi'} \in \Real{|S|}$ denote the vector of components $\bar{\delta}_f^{\pi'} (s) = \E_{a \sim \pi', s' \sim P} [\delta_f(s,a,s') |s]$.  Observe that
\begin{eqnarray*}
\begin{aligned}
\underset{\begin{subarray}{c} s \sim d^{\pi'} \\ a \sim \pi' \\ s' \sim P\end{subarray}}{\E} \left[ \delta_f (s,a,s') \right] & = \left\langle d^{\pi'}, \bar{\delta}_f^{\pi'} \right\rangle\\
& = \left\langle d^{\pi}, \bar{\delta}_f^{\pi'} \right\rangle + \left\langle d^{\pi'} - d^{\pi}, \bar{\delta}_f^{\pi'} \right\rangle
\end{aligned}
\end{eqnarray*}

This term is then straightforwardly bounded by applying H\"{o}lder's inequality; for any $p,q \in [1, \infty]$ such that $1/p + 1/q = 1$, we have
\begin{equation*}
\left\langle d^{\pi}, \bar{\delta}_f^{\pi'} \right\rangle + \left\| d^{\pi'} - d^{\pi}\right\|_p \left\|\bar{\delta}_f^{\pi'} \right\|_q \geq \underset{\begin{subarray}{c} s \sim d^{\pi'} \\ a \sim \pi' \\ s' \sim P\end{subarray}}{\E} \left[ \delta_f (s,a,s') \right] \geq  \left\langle d^{\pi}, \bar{\delta}_f^{\pi'} \right\rangle - \left\| d^{\pi'} - d^{\pi}\right\|_p \left\|\bar{\delta}_f^{\pi'} \right\|_q.  
\end{equation*}
The lower bound leads to (\ref{bound0}), and the upper bound leads to (\ref{bound0b}).

We choose $p=1$ and $q = \infty$; however, we believe that this step is very interesting, and different choices for dealing with the inner product $\left\langle d^{\pi'} - d^{\pi}, \bar{\delta}_f^{\pi'} \right\rangle$ may lead to novel and useful bounds.

With $\left\| d^{\pi'} - d^{\pi}\right\|_1 = 2 D_{TV} (d^{\pi'} || d^{\pi})$ and $\left\|\bar{\delta}_f^{\pi'} \right\|_{\infty} = \epsilon_f^{\pi'}$, the bounds are almost obtained. The last step is to observe that, by the importance sampling identity,
\begin{eqnarray*}
\left\langle d^{\pi}, \bar{\delta}_f^{\pi'} \right\rangle &=& \underset{\begin{subarray}{c} s \sim d^{\pi} \\ a \sim \pi' \\ s' \sim P\end{subarray}}{\E} \left[ \delta_f (s,a,s') \right] \\
&=& \underset{\begin{subarray}{c} s \sim d^{\pi} \\ a \sim \pi \\ s' \sim P\end{subarray}}{\E} \left[\left( \frac{\pi'(a|s)}{\pi(a|s)} \right)\delta_f (s,a,s') \right]. 
\end{eqnarray*}

After grouping terms, the bounds are obtained. 
\end{proof}

This lemma makes use of many ideas that have been explored before; for the special case of $f = V^{\pi}$, this strategy (after bounding $D_{TV} (d^{\pi'} || d^{\pi})$) leads directly to some of the policy improvement bounds previously obtained by Pirotta et~al. and Schulman et~al. The form given here is slightly more general, however, because it allows for freedom in choosing $f$. 

\begin{remark}
It is reasonable to ask if there is a choice of $f$ which maximizes the lower bound here. This turns out to trivially be $f = V^{\pi'}$. Observe that $\E_{s' \sim P} \left[\delta_{V^{\pi'}}(s,a,s') | s,a\right] = A^{\pi'} (s,a)$. For all states, $\E_{a \sim \pi'} [A^{\pi'} (s,a)] = 0$ (by the definition of $A^{\pi'}$), thus $\bar{\delta}^{\pi'}_{V^{\pi'}} = 0$ and $\epsilon_{V^{\pi'}}^{\pi'} = 0$. Also, $L_{\pi, V^{\pi'}} (\pi') = -\E_{s \sim d^{\pi}, a \sim \pi} \left[A^{\pi'}(s,a)\right]$; from (\ref{jpif}) with $f = V^{\pi'}$, we can see that this exactly equals $J(\pi') - J(\pi)$. Thus, for $f = V^{\pi'}$, we recover an exact equality. While this is not practically useful to us (because, when we want to optimize a lower bound with respect to $\pi'$, it is too expensive to evaluate $V^{\pi'}$ for each candidate to be practical), it provides insight: the penalty coefficient on the divergence captures information about the mismatch between $f$ and $V^{\pi'}$. 
\end{remark}

Next, we are interested in bounding the divergence term, $\|d^{\pi'} - d^{\pi}\|_1$. We give the following lemma; to the best of our knowledge, this is a new result. 

\begin{lemma}\label{divergencebound}
The divergence between discounted future state visitation distributions, $\|d^{\pi'} - d^{\pi}\|_1$, is bounded by an average divergence of the policies $\pi'$ and $\pi$:
\begin{equation}
\|d^{\pi'} - d^{\pi}\|_1 \leq \frac{2\gamma}{1-\gamma} \underset{s \sim d^{\pi}}{\E} \left[ D_{TV} (\pi' || \pi)[s]\right], 
\end{equation}
where $D_{TV} (\pi'||\pi)[s] = (1/2) \sum_a |\pi'(a|s) - \pi(a|s)|$.  
\end{lemma}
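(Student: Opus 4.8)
The plan is to start from the identity (\ref{dpidiff}), $d^{\pi'} - d^{\pi} = \gamma \bar{G} \Delta d^{\pi}$, which was already derived in the preliminaries, and simply take $\ell_1$ norms of both sides. By submultiplicativity of the induced $\ell_1$ operator norm, $\|d^{\pi'} - d^{\pi}\|_1 \leq \gamma \|\bar{G}\|_1 \|\Delta d^{\pi}\|_1$, so the whole argument reduces to bounding these two factors separately and then reassembling.

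First I would bound $\|\bar{G}\|_1$. Writing $\bar{G} = (I - \gamma P_{\pi'})^{-1} = \sum_{t=0}^{\infty} (\gamma P_{\pi'})^t$ via the Neumann series (valid since $\gamma < 1$), and using the fact that $P_{\pi'}$ is a nonnegative column-stochastic matrix---each of its columns is a probability distribution over next states, so its maximum absolute column sum, i.e.\ its induced $\ell_1$ norm, is exactly $1$---I obtain $\|\bar{G}\|_1 \leq \sum_{t=0}^{\infty} \gamma^t \|P_{\pi'}\|_1^t = (1-\gamma)^{-1}$.

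Next I would bound $\|\Delta d^{\pi}\|_1$ and connect it to the average policy divergence. Expanding componentwise, $(\Delta d^{\pi})(s') = \sum_s (P_{\pi'}(s'|s) - P_{\pi}(s'|s)) d^{\pi}(s)$; applying the triangle inequality and interchanging the order of the finite sums gives $\|\Delta d^{\pi}\|_1 \leq \sum_s d^{\pi}(s) \sum_{s'} |P_{\pi'}(s'|s) - P_{\pi}(s'|s)|$. Since $P_{\pi'}(s'|s) - P_{\pi}(s'|s) = \int da\, P(s'|s,a)(\pi'(a|s) - \pi(a|s))$, a further triangle inequality together with marginalizing $s'$ out of the transition kernel (using $\sum_{s'} P(s'|s,a) = 1$) shows the inner sum is at most $\int da\, |\pi'(a|s) - \pi(a|s)| = 2 D_{TV}(\pi'||\pi)[s]$. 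Hence $\|\Delta d^{\pi}\|_1 \leq 2 \E_{s \sim d^{\pi}}[D_{TV}(\pi'||\pi)[s]]$.

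Combining the three estimates yields $\|d^{\pi'} - d^{\pi}\|_1 \leq \gamma \cdot (1-\gamma)^{-1} \cdot 2\E_{s \sim d^{\pi}}[D_{TV}(\pi'||\pi)[s]]$, which is exactly the claim. I expect the only genuinely delicate step to be the operator-norm bound on $\bar{G}$: one must be careful about the matrix-vector convention (the paper treats distributions as column vectors acted on by $P_{\pi}$ from the left, so $P_{\pi}$ is column-stochastic and the \emph{column}-sum norm is the relevant induced norm) and must justify convergence of the Neumann series. The remaining manipulations are routine applications of the triangle inequality and interchange of finite summations.
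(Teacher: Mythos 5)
Your proposal is correct and follows essentially the same route as the paper's own proof: both start from the identity $d^{\pi'} - d^{\pi} = \gamma \bar{G} \Delta d^{\pi}$, bound $\|\bar{G}\|_1 \leq (1-\gamma)^{-1}$ via the Neumann series and stochasticity of $P_{\pi'}$, and bound $\|\Delta d^{\pi}\|_1 \leq 2\,\E_{s \sim d^{\pi}}\left[D_{TV}(\pi'\|\pi)[s]\right]$ by the triangle inequality and marginalizing out $s'$. Your added care about the column-stochastic convention and the convergence of the Neumann series only makes explicit what the paper leaves implicit.
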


\begin{proof}
First, using (\ref{dpidiff}), we obtain
\begin{eqnarray*}
\|d^{\pi'} - d^{\pi}\|_1 &=& \gamma \|\bar{G} \Delta d^{\pi}\|_1 \\
&\leq & \gamma \|\bar{G}\|_1 \|\Delta d^{\pi}\|_1.
\end{eqnarray*}
$\|\bar{G}\|_1$ is bounded by:
%
\begin{equation*}
\|\bar{G}\|_1 = \|(I - \gamma P_{\pi'})^{-1}\|_1 \leq \sum_{t=0}^{\infty} \gamma^t \left\|P_{\pi'}\right\|_1^t = (1-\gamma)^{-1}
\end{equation*}

To conclude the lemma, we bound $\|\Delta d^{\pi}\|_1$. 
\begin{eqnarray*}
\|\Delta d^{\pi}\|_1 &=& \sum_{s'} \left| \sum_s \Delta(s'|s) d^{\pi}(s) \right| \\
&\leq& \sum_{s,s'} \left| \Delta(s'|s)\right| d^{\pi}(s) \\
&=& \sum_{s,s'} \left| \sum_a P(s'|s,a) \left(\pi'(a|s) - \pi(a|s) \right)\right| d^{\pi}(s) \\
&\leq& \sum_{s,a,s'} P(s'|s,a) \left|\pi'(a|s) - \pi(a|s) \right| d^{\pi}(s) \\
&=& \sum_{s,a} \left|\pi'(a|s) - \pi(a|s) \right| d^{\pi}(s) \\
&=& 2 \underset{s \sim d^{\pi}}{\E} \left[ D_{TV} (\pi'||\pi)[s] \right].
\end{eqnarray*} 
\end{proof}

The new policy improvement bound follows immediately. 

\performancebound*

%

\begin{proof} Begin with the bounds from lemma \ref{policybound0} and bound the divergence $D_{TV} (d^{\pi'} || d^{\pi})$ by lemma \ref{divergencebound}.
\end{proof}

\subsection{Proof of Analytical Solution to LQCLP}

\begin{theorem}[Optimizing Linear Objective with Linear and Quadratic Constraints] \label{thmlqclp}

Consider the problem
\begin{align}
p^* = \min_x &\; g^T x \nonumber \\
\text{s.t.} &\; b^T x + c \leq 0 \label{lqclp} \\
&\; x^T H x \leq \delta, \nonumber
\end{align}
where $g, b, x \in \Real{n}$, $c, \delta \in \Real{}$, $\delta > 0$, $H \in \Sym{n}$, and $H \succ 0$. When there is at least one strictly feasible point, the optimal point $x^*$ satisfies
\begin{align*}
x^* &= - \frac{1}{\lambda^*} H^{-1} \left(g + \nu^* b\right),
\end{align*}
where $\lambda^*$ and $\nu^*$ are defined by
\begin{align*}
\nu^* &= \left( \dfrac{\lambda^* c -  r}{s} \right)_+, \\
\lambda^* &= \arg \max_{\lambda \geq 0} \; \left\{ \begin{array}{ll}
f_a(\lambda) \doteq \frac{1}{2\lambda} \left(\frac{r^2}{s} -q\right) + \frac{\lambda}{2}\left(\frac{c^2}{s} - \delta\right) - \frac{rc}{s}  & \text{\textnormal{if} } \lambda c - r > 0 \\
f_b(\lambda) \doteq -\frac{1}{2} \left(\frac{q}{\lambda}  + \lambda \delta\right) & \text{\textnormal{otherwise}},
\end{array}\right.
\end{align*}
with $q = g^T H^{-1} g$, $r = g^T H^{-1} b$, and $s = b^T H^{-1} b$. 

Furthermore, let $\Lambda_a \doteq \{\lambda | \lambda c - r > 0, \lambda \geq 0\}$, and $\Lambda_b \doteq \{\lambda | \lambda c - r \leq 0, \lambda \geq 0\}$. The value of $\lambda^*$ satisfies
\begin{equation*}
\lambda^* \in \left\{ \lambda_a^* \doteq \text{\textnormal{Proj}}\left(\sqrt{\frac{q - r^2 /s}{\delta - c^2 /s}}, \Lambda_a\right), \lambda_b^* \doteq \text{\textnormal{Proj}}\left(\sqrt{\frac{q}{\delta}}, \Lambda_b \right)\right\},
\end{equation*}
with $\lambda^* = \lambda_a^*$ if $f_a (\lambda_a^*) > f_b (\lambda_b^*)$ and $\lambda^* = \lambda_b^*$ otherwise, and $\text{Proj}(a,S)$ is the projection of a point $x$ on to a set $S$. Note: the projection of a point $x \in \Real{}$ onto a convex segment of $\Real{}$, $[a,b]$, has value $\text{Proj}(x,[a,b]) = \max(a,\min(b,x))$.

\end{theorem}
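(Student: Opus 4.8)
The plan is to solve (\ref{lqclp}) by Lagrangian duality. Since $H \succ 0$, the quadratic constraint is convex and its sublevel set is a compact ellipsoid, so a finite minimizer exists; the assumed strictly feasible point supplies Slater's condition, giving strong duality and making the KKT conditions necessary and sufficient. First I would form the Lagrangian $\mathcal{L}(x,\lambda,\nu) = g^T x + \nu(b^T x + c) + \tfrac{\lambda}{2}(x^T H x - \delta)$ with $\lambda,\nu \geq 0$ and minimize over $x$. Stationarity $\nabla_x \mathcal{L} = g + \nu b + \lambda H x = 0$ immediately yields the claimed form $x^* = -\tfrac{1}{\lambda^*} H^{-1}(g + \nu^* b)$. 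At the optimum $\lambda^* > 0$, since $\lambda = 0$ drives the inner minimization to $-\infty$ unless $g + \nu b = 0$.

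Substituting this minimizer back collapses the dual to a concave problem in $(\lambda,\nu)$. A direct computation using $q = g^T H^{-1} g$, $r = g^T H^{-1} b$, and $s = b^T H^{-1} b$ gives the dual function $-\tfrac{1}{2\lambda}(q + 2\nu r + \nu^2 s) + \nu c - \tfrac{\lambda\delta}{2}$. Next I would eliminate $\nu$: for fixed $\lambda$ this is a concave quadratic in $\nu$ with unconstrained maximizer $(\lambda c - r)/s$, so projecting onto $\nu \geq 0$ gives exactly $\nu^* = ((\lambda c - r)/s)_+$. Substituting $\nu^*$ then splits the dual into the two stated regimes: $\lambda c - r > 0$ produces $f_a(\lambda)$ (via completing the square in $\nu$), while $\lambda c - r \leq 0$ forces $\nu^* = 0$ and produces $f_b(\lambda)$.

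It remains to maximize the resulting piecewise function over $\lambda \geq 0$, which decomposes as maximizing $f_a$ over $\Lambda_a$ and $f_b$ over $\Lambda_b$ and keeping the larger value. The key structural fact I would establish is that each piece is concave on $(0,\infty)$: $f_b(\lambda) = -\tfrac12(q/\lambda + \lambda\delta)$ manifestly so, and for $f_a(\lambda) = \tfrac{1}{2\lambda}(r^2/s - q) + \tfrac{\lambda}{2}(c^2/s - \delta) - rc/s$ because the Cauchy--Schwarz inequality in the $H^{-1}$ inner product gives $r^2 \leq qs$, so the coefficient $r^2/s - q \leq 0$ and $\lambda \mapsto (r^2/s - q)/(2\lambda)$ is concave (a nonpositive multiple of the convex map $1/\lambda$), the remaining term being affine. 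Setting each derivative to zero gives the interior maximizers $\sqrt{(q - r^2/s)/(\delta - c^2/s)}$ and $\sqrt{q/\delta}$; by concavity the constrained maximizer on each interval is the projection of these onto $\Lambda_a$ and $\Lambda_b$, which is precisely $\lambda_a^*$ and $\lambda_b^*$. Comparing $f_a(\lambda_a^*)$ with $f_b(\lambda_b^*)$ selects $\lambda^*$, and plugging $\lambda^*,\nu^*$ into the stationarity expression recovers $x^*$.

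The main obstacle is this final one-dimensional optimization: I must track the piecewise definition carefully, control the sign of the coefficient $c^2/s - \delta$ (strict feasibility forces $c < \sqrt{\delta s}$, which governs when the interior critical point of $f_a$ is real and positive), and handle the junction between $\Lambda_a$ and $\Lambda_b$ together with degenerate cases — for instance $s = 0$, or an interior critical point falling outside its interval — so that the projection operator correctly returns the appropriate endpoint. In the edge case where the linear constraint is slack at the optimum, the solution is captured by the $f_b$ branch with $\nu^* = 0$, so the comparison still returns the right $\lambda^*$. That the two pieces join into a single concave function of $\lambda$ is automatic, since the partial dual $\max_{\nu \geq 0}$ of a jointly concave dual function is concave; this is what guarantees that comparing the two local maxima suffices to find the global optimum.
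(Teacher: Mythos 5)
Your proposal is correct and follows essentially the same route as the paper's proof: strong duality via Slater's condition, stationarity in $x$ giving $x^* = -\tfrac{1}{\lambda}H^{-1}(g+\nu b)$, elimination of $\nu$ by projecting the unconstrained quadratic maximizer onto $\nu \geq 0$, and a piecewise one-dimensional maximization over $\lambda$ resolved by projecting the interior critical points onto $\Lambda_a$, $\Lambda_b$ and comparing the two branch values. The only differences are presentational: you justify the projection step by concavity of each branch (via Cauchy--Schwarz), whereas the paper uses a closed-form lemma for $\max_{\lambda \geq 0} -\tfrac{1}{2}(A/\lambda + B\lambda)$, and the degenerate cases you flag as obstacles (the sign of $c^2/s - \delta$ when $c<0$ versus $c>0$) are exactly the ones the paper disposes of explicitly before applying the two-branch formula.
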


\begin{proof}
This is a convex optimization problem. When there is at least one strictly feasible point, strong duality holds by Slater's theorem. We exploit strong duality to solve the problem analytically.   
\begin{align*}
p^* &= \min_x \underset{\begin{subarray}{c} \lambda \geq 0 \\ \nu \geq 0\end{subarray}}{\max} \; g^T x + \frac{\lambda}{2} \left( x^T H x - \delta \right) + \nu \left(b^Tx +c \right)\\
 &= \underset{\begin{subarray}{c} \lambda \geq 0 \\ \nu \geq 0\end{subarray}}{\max} \min_x  \; \frac{\lambda}{2} x^T H x + \left(g + \nu b\right)^T x + \left( \nu c - \frac{1}{2} \lambda \delta \right) && \text{Strong duality}\\
& \;\;\; \implies x^* = -\frac{1}{\lambda} H^{-1} \left(g + \nu b \right) && \nabla_x \calL(x,\lambda, \nu) = 0 \\
&= \underset{\begin{subarray}{c} \lambda \geq 0 \\ \nu \geq 0\end{subarray}}{\max}  \; -\frac{1}{2\lambda} \left(g + \nu b \right)^T H^{-1} \left(g + \nu b \right) + \left( \nu c - \frac{1}{2} \lambda \delta \right) && \text{Plug in } x^*\\
&= \underset{\begin{subarray}{c} \lambda \geq 0 \\ \nu \geq 0\end{subarray}}{\max}  \; -\frac{1}{2\lambda} \left(q + 2 \nu r + \nu^2 s\right) + \left( \nu c - \frac{1}{2} \lambda \delta \right) && \text{Notation: } q \doteq g^T H^{-1} g, \;\; r \doteq g^T H^{-1} b, \;\; s \doteq b^T H^{-1} b.\\
& \;\;\; \implies \diffp{\calL}{\nu} = -\frac{1}{2\lambda}\left( 2r + 2 \nu s \right) + c \\
& \;\;\; \implies \nu = \left(\frac{\lambda c - r}{s} \right)_+ && \text{Optimizing single-variable convex quadratic function over } \Realp{}\\
&= \max_{\lambda \geq 0} \;  \left\{ \begin{array}{ll}
\frac{1}{2\lambda} \left(\frac{r^2}{s} -q\right) + \frac{\lambda}{2}\left(\frac{c^2}{s} - \delta\right) - \frac{rc}{s}  & \text{if } \lambda \in \Lambda_a  \\
-\frac{1}{2} \left(\frac{q}{\lambda}  + \lambda \delta\right) & \text{if } \lambda \in \Lambda_b
\end{array}\right. && \text{Notation: } \begin{array}{ll}
\Lambda_a \doteq \{\lambda | \lambda c - r  > 0, \;\; \lambda \geq 0\}, \\ \Lambda_b \doteq \{\lambda | \lambda c - r \leq 0, \;\; \lambda \geq 0\}
\end{array} 
\end{align*}
Observe that when $c < 0$, $\Lambda_a = [0, r/c)$ and $\Lambda_b = [r/c, \infty)$; when $c > 0$, $\Lambda_a = [r/c, \infty)$ and $\Lambda_b = [0, r/c)$. 

Notes on interpreting the coefficients in the dual problem:
\begin{itemize}
\item We are guaranteed to have $r^2/s - q \leq 0$ by the Cauchy-Schwarz inequality. Recall that $q = g^T H^{-1} g$, $r = g^T H^{-1} b$, $s = b^T H^{-1} b$. The Cauchy-Scwarz inequality gives:
\begin{align*}
& \| H^{-1/2} b\|_2^2  \| H^{-1/2} g\|_2^2  \geq \left( \left( H^{-1/2} b\right)^T \left(H^{-1/2} g \right) \right)^2 \\
\implies & \left( b^T H^{-1} b \right) \left(g^T H^{-1} g \right) \geq \left(b^T H^{-1} g\right)^2 \\
\therefore \;\;\;& q s \geq r^2.
\end{align*}
\item The coefficient $c^2/s - \delta$ relates to whether or not the plane of the linear constraint intersects the quadratic trust region. An intersection occurs if there exists an $x$ such that $c + b^T x = 0$ with $x^T H x \leq \delta$. To check whether this is the case, we solve
\begin{equation}
x^* = \arg \min_x x^T H x \;\;\; : \;\;\; c + b^T x = 0
\end{equation}
and see if $x^{*T} H x^* \leq \delta$. The solution to this optimization problem is $x^* = c H^{-1} b / s$, thus $x^{*T} H x^* = c^2 /s$. If $c^2 /s - \delta \leq 0$, then the plane intersects the trust region; otherwise, it does not. 
\end{itemize}

If $c^2 /s - \delta > 0$ and $c < 0$, then the quadratic trust region lies entirely within the linear constraint-satisfying halfspace, and we can remove the linear constraint without changing the optimization problem. If $c^2 / s - \delta > 0$ and $c > 0$, the problem is infeasible (the intersection of the quadratic trust region and linear constraint-satisfying halfspace is empty). Otherwise, we follow the procedure below.

Solving the dual for $\lambda$: for any $A>0$, $B>0$, the problem
\begin{equation*}
\max_{\lambda \geq 0}  f(\lambda) \doteq -\frac{1}{2} \left(\frac{A}{\lambda} + B\lambda\right)
\end{equation*}
has optimal point $\lambda^* = \sqrt{A / B}$ and optimal value $f(\lambda^*) = -\sqrt{AB}$.

We can use this solution form to obtain the optimal point on each segment of the piecewise continuous dual function for $\lambda$:
\begin{align*}
& \text{objective} && \text{optimal point (before projection)} && \text{optimal point (after projection)}\\
\cline{1-6}
& f_a(\lambda) \doteq \frac{1}{2\lambda} \left(\frac{r^2}{s} -q\right) + \frac{\lambda}{2}\left(\frac{c^2}{s} - \delta\right) - \frac{rc}{s}  && \lambda_a \doteq \sqrt{\frac{q - r^2 /s}{\delta - c^2 /s}} && \lambda_a^* = \text{Proj}(\lambda_a, \Lambda_a)\\ 
& f_b(\lambda) \doteq -\frac{1}{2} \left(\frac{q}{\lambda}  +\lambda \delta\right) && \lambda_b \doteq \sqrt{\frac{q}{\delta}}  && \lambda_b^* = \text{Proj}(\lambda_b, \Lambda_b)
\end{align*}

The optimization is completed by comparing $f_a(\lambda_a^*)$ and $f_b(\lambda_b^*)$:
\begin{equation*}
\lambda^* = \left\{ \begin{array}{ll}
\lambda_a^* & f_a (\lambda_a^*) \geq f_b (\lambda_b^*) \\
\lambda_b^* & \text{otherwise}.
\end{array}\right.
\end{equation*}

\end{proof}

\subsection{Experimental Parameters}
\subsubsection{Environments} \label{envirosec}

In the Circle environments, the reward and cost functions are
\begin{align*}
R(s) &= \frac{v^T [-y, x]}{1+ \left| \|[x,y]\|_2 - d \right|}, \\
C(s) &= \pmb{1} \left[ |x| > x_{lim}\right],
\end{align*}
where $x,y$ are the coordinates in the plane, $v$ is the velocity, and $d, x_{lim}$ are environmental parameters. We set these parameters to be
\begin{center}
\begin{tabular}{c|ccc}
& Point-mass & Ant & Humanoid \\
\hline
$d$ & 15 & 10 & 10 \\
$x_{lim}$ & 2.5 & 3  & 2.5
\end{tabular}
\end{center}

In Point-Gather, the agent receives a reward of $+10$ for collecting an apple, and a cost of $1$ for collecting a bomb. Two apples and eight bombs spawn on the map at the start of each episode. In Ant-Gather, the reward and cost structure was the same, except that the agent also receives a reward of $-10$ for falling over (which results in the episode ending). Eight apples and eight bombs spawn on the map at the start of each episode.

\begin{figure}[t]
\centering
\begin{subfigure}{.22\textwidth}
  \centering
  \includegraphics[height=0.9\linewidth]{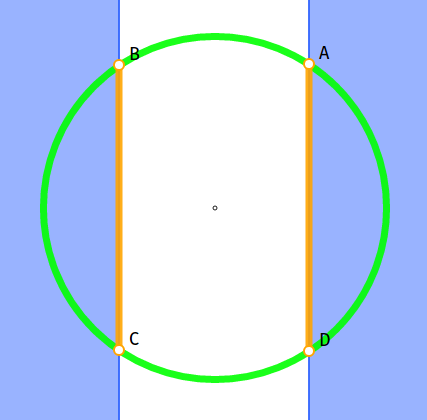}
\end{subfigure}
\caption[]{In the Circle task, reward is maximized by moving along the green circle. The agent is not allowed to enter the blue regions, so its optimal constrained path follows the line segments $AD$ and $BC$.  }
  \label{geom}
\end{figure}

\subsubsection{Algorithm Parameters}
In all experiments, we use Gaussian policies with mean vectors given as the outputs of neural networks, and with variances that are separate learnable parameters. The policy networks for all experiments have two hidden layers of sizes $(64,32)$ with $\tanh$ activation functions.

We use GAE-$\lambda$ \cite{Schulman2016} to estimate the advantages and constraint advantages, with neural network value functions. The value functions have the same architecture and activation functions as the policy networks. We found that having different $\lambda^{GAE}$ values for the regular advantages and the constraint advantages worked best. We denote the $\lambda^{GAE}$ used for the constraint advantages as $\lambda_C^{GAE}$. 

For the failure prediction networks $P_{\phi} (s \to U)$, we use neural networks with a single hidden layer of size $(32)$, with output of one sigmoid unit. At each iteration, the failure prediction network is updated by some number of gradient descent steps using the Adam update rule to minimize the prediction error. To reiterate, the failure prediction network is a model for the probability that the agent will, at some point in the next $T$ time steps, enter an unsafe state.  The cost bonus was weighted by a coefficient $\alpha$, which was $1$ in all experiments except for Ant-Gather, where it was $0.01$. Because of the short time horizon, no cost bonus was used for Point-Gather. 

For all experiments, we used a discount factor of $\gamma = 0.995$, a GAE-$\lambda$ for estimating the regular advantages of $\lambda^{GAE} = 0.95$, and a KL-divergence step size of $\delta_{KL} = 0.01$.  

Experiment-specific parameters are as follows:
\begin{center}
\begin{tabular}{c|ccccc}
Parameter & Point-Circle & Ant-Circle & Humanoid-Circle & Point-Gather & Ant-Gather\\
\hline
Batch size & 50,000 & 100,000 & 50,000  & 50,000 & 100,000\\
Rollout length & 50-65 & 500 & 1000 & 15  & 500 \\
Maximum constraint value $d$ & 5 & 10 & 10 & 0.1 & 0.2 \\
Failure prediction horizon $T$ & 5 & 20 & 20 & (N/A) & 20\\
Failure predictor SGD steps per itr & 25 & 25 & 25 & (N/A) & 10 \\
Predictor coeff $\alpha$ & 1 & 1 & 1 & (N/A) & 0.01 \\
$\lambda_C^{GAE}$ & 1 & 0.5 & 0.5 & 1 & 0.5
\end{tabular}
\end{center}
Note that these same parameters were used for all algorithms. 

We found that the Point environment was agnostic to $\lambda_C^{GAE}$, but for the higher-dimensional environments, it was necessary to set $\lambda_C^{GAE}$ to a value $<1$. Failing to discount the constraint advantages led to substantial overestimates of the constraint gradient magnitude, which led the algorithm to take unsafe steps. The choice $\lambda_C^{GAE} = 0.5$ was obtained by a hyperparameter search in $\{0.5,0.92,1\}$, but $0.92$ worked nearly as well. 

\subsubsection{Primal-Dual Optimization Implementation} \label{pdoimplement}

Our primal-dual implementation is intended to be as close as possible to our CPO implementation. The key difference is that the dual variables for the constraints are stateful, learnable parameters, unlike in CPO where they are solved from scratch at each update. 

The update equations for our PDO implementation are
\begin{align*}
\theta_{k+1} &= \theta_k + s^j  \sqrt{\frac{2\delta}{ (g - \nu_k b)^T H^{-1} (g - \nu_k b)}} H^{-1} \left(g - \nu_k b\right) \\
\nu_{k+1} &= \left( \nu_k + \alpha \left( J_C (\pi_k) - d \right) \right)_+,
\end{align*}
where $s^j$ is from the backtracking line search ($s \in (0,1)$ and $j \in \{0,1,...,J\}$, where $J$ is the backtrack budget; this is the same line search as is used in CPO and TRPO), and $\alpha$ is a learning rate for the dual parameters. $\alpha$ is an important hyperparameter of the algorithm: if it is set to be too small, the dual variable won't update quickly enough to meaningfully enforce the constraint; if it is too high, the algorithm will overcorrect in response to constraint violations and behave too conservatively. We experimented with a relaxed learning rate, $\alpha = 0.001$, and an aggressive learning rate, $\alpha = 0.01$. The aggressive learning rate performed better in our experiments, so all of our reported results are for $\alpha = 0.01$. 

Selecting the correct learning rate can be challenging; the need to do this is obviated by CPO.

\end{document}